\documentclass{article}

\usepackage[final,main]{neurips_2026}

\usepackage[utf8]{inputenc}
\usepackage[T1]{fontenc}
\usepackage{microtype}
\usepackage{graphicx}
\usepackage{subcaption}
\usepackage{booktabs} 
\usepackage{makecell}
\usepackage{url}
\usepackage{amsfonts}
\usepackage{nicefrac}
\usepackage{xcolor}
\usepackage{wrapfig}
\usepackage{needspace}

\usepackage{hyperref}
\usepackage{amsmath}
\usepackage{amssymb}
\usepackage{mathtools}
\usepackage{amsthm}
\usepackage{placeins}
\usepackage{enumitem}
\usepackage{multirow}
\setlist[enumerate, 1]{label=(A\arabic*)}  
\usepackage{colortbl}

\definecolor{smallscale}{RGB}{238,246,255}
\definecolor{largescale}{RGB}{240,248,241}
\definecolor{smallscalehead}{RGB}{218,234,255}
\definecolor{largescalehead}{RGB}{221,240,224}

\definecolor{credithead}{RGB}{225,235,250}
\definecolor{creditrow}{RGB}{244,248,253}
\definecolor{routinghead}{RGB}{224,242,231}
\definecolor{routingrow}{RGB}{244,250,246}
\definecolor{emahead}{RGB}{247,235,221}
\definecolor{emarow}{RGB}{253,248,242}

\usepackage[capitalize,noabbrev]{cleveref}
\theoremstyle{plain}
\newtheorem{theorem}{Theorem}[section]

\newtheorem{lemma}[theorem]{Lemma}

\theoremstyle{definition}

\theoremstyle{remark}

\DeclareMathOperator{\softmax}{softmax}
\DeclareMathOperator{\sg}{sg}

\usepackage{algorithmic}
\usepackage{algorithm}

\usepackage[textsize=tiny]{todonotes}

\title{Distributionally Robust Mixture-of-Experts Training}

\author{%
  Xin Teng \\
  New York University \\
  Center for Data Science, NYU Shanghai\\
  \texttt{xt2251@nyu.edu} \\
  \And
  Muxiao Li \\
  New York University \\
  Center for Data Science, NYU Shanghai\\
  \texttt{ml9007@nyu.edu} \\
  \And
  Hongyi Wen \\
  New York University \\
  Center for Data Science, NYU Shanghai\\
  \texttt{hongyi.wen@nyu.edu} \\
}

\begin{document}

\maketitle

\begin{abstract}
Mixture-of-Experts (MoE) transformers scale capacity by activating only a few experts per token, but this sparsity creates a hidden reliability problem: when routing is imperfect, load-balanced models may send tokens to experts that are insufficiently trained for the assigned inputs. We propose \emph{Distributionally Robust MoE Training} (DRMoET), a drop-in objective that treats layer-wise experts as endogenous robustness groups and optimizes high-loss routing outcomes rather than merely equalizing traffic. DRMoET updates a per-layer expert distribution by an entropy-regularized softmax rule on EMA-smoothed, activation-weighted expert losses, strengthening plausible non-top routing paths while preserving standard MoE computation. Under the FLAME-MoE recipe at 746M-total and 10.3B-total scales, DRMoET improves downstream averages over both standard FLAME-MoE and auxiliary-loss-free balancing. At 10.3B total parameters and 67B training tokens, DRMoET improves the seven-task average from 0.6625 to 0.6767, while the auxiliary-loss-free baseline achieves 0.6431. Mechanistic analyses show lower expert-loss variance with nearly unchanged mean loss, 4.3\% lower excess loss under forced mid-$k$ misrouting, and improved domain--expert specialization. These results position routing robustness--not only utilization balance--as a practical objective for reliable sparse MoE scaling. Project page and code are available at: https://drmoet.github.io/.
\end{abstract}

\section{Introduction}
\label{sec:introduction}

Scaling has been a central driver of progress in deep generative language models, but na\"ively increasing parameter count quickly becomes computationally prohibitive. Mixture-of-Experts (MoE) architectures offer an appealing alternative: they increase \emph{total} model capacity while keeping \emph{per-token} computation nearly constant by routing each token through only a small subset of expert subnetworks. This sparse activation principle underlies widely used MoE systems, including Switch Transformer~\citep{fedus2022switch} and GShard~\citep{lepikhin2021}, and continues to be adopted in modern large-scale language models.

Despite their promise, MoE models introduce a hidden reliability problem: sparse routing exposes the model to the quality of whichever experts are selected. The router need not fail catastrophically for this to matter; imperfect gating, load-balancing pressure, or distribution shift can send tokens to plausible but non-optimal experts. Under standard training, these lower-ranked or less frequently favored experts receive weaker task-aligned updates and can remain noticeably worse than top experts. As a result, occasional misrouting causes disproportionate loss. The core issue is therefore not merely uneven utilization, but uneven expert competence under imperfect routing.

\begin{figure*}[t]
    \centering
    \includegraphics[width=0.95\linewidth]{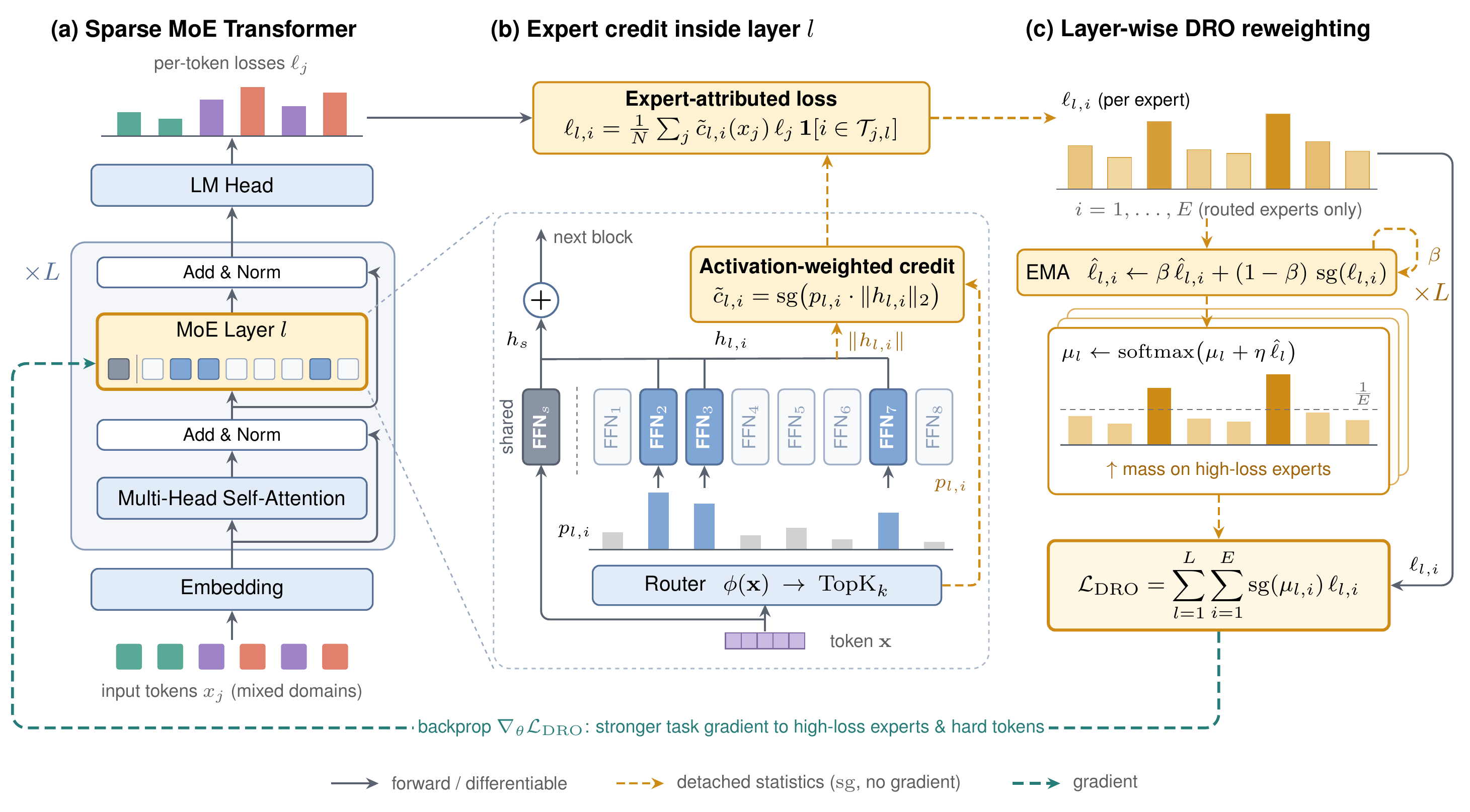}
    \caption{Distributionally Robust MoE Training workflow. DRMoET treats experts as layer-wise robustness groups and adaptively upweights high-loss routing outcomes, strengthening plausible non-optimal paths rather than only encouraging uniform utilization.}
    \label{fig:teaser}
\end{figure*}

A common mitigation is to add auxiliary load-balancing terms that penalize skewed routing and encourage more even expert utilization~\citep{fedus2022switch,lepikhin2021}. While effective at preventing extreme collapse, these regularizers optimize allocation rather than \textit{competence}: balanced traffic does not ensure that non-top experts are capable when selected. Moreover, because expert specialization is a desirable property in MoE models, uniformly pushing routing toward all experts can conflict with selective activation when certain experts are genuinely better suited to particular tokens. This motivates a complementary objective that improves worst routing outcomes without erasing useful router preferences.

Recent work has expanded MoE research beyond routing collapse and load balancing.
On the expert specialization side, several approaches explicitly encourage clearer expert roles and more stable specialization, rather than only enforcing uniform utilization \citep{krishnamurthy2023improving,dai2024deepseekmoe}. In parallel, large-scale MoE LLMs and training recipes emphasize that expert quality and specialization remain central to scaling, not just utilization \citep{jiang2024mixtral}. Beyond FFN-MoE, head or attention-level MoE treats attention heads as experts, pushing sparsity and specialization into the attention mechanism itself \citep{jin2024moh,wu2024multi}. For adaptation, recent work studies parameter-efficient MoE variants that preserve expert structure while substantially reducing trainable parameters \citep{zadouri2023pushing}.

In this work, we propose \emph{Distributionally Robust MoE Training} (DRMoET), a training framework that makes routing robustness explicit. Unlike standard Group DRO~\citep{sagawa2019distributionally}, whose groups are predefined, DRMoET treats each layer's routed experts as endogenous groups induced by the current router. It maintains a per-layer distribution over experts and shifts mass toward high-loss expert--input interactions using EMA-smoothed, activation-weighted losses. The resulting objective sends stronger task gradients to hard-domain tokens and plausible non-top experts while leaving the router, architecture, and sparse forward/backward computation unchanged.

This formulation makes predictions beyond aggregate accuracy: it should reduce expert-quality dispersion, improve behavior when non-top experts are selected, and preserve rather than collapse specialization. We evaluate DRMoET in pretraining under the FLAME-MoE recipe at two model scales: \texttt{FLAME-MoE-290M-746M} and \texttt{FLAME-MoE-1.7B-10.3B}. Our experiments support all three: DRMoET reduces worst-expert loss and expert-loss variance while keeping mean expert loss nearly unchanged; a forced mid-$k$ misrouting probe shows 4.3\% less degradation; and functional specialization metrics, including expert--domain mutual information and competence advantage, improve under DRMoET. Across downstream benchmarks, mechanism probes, and ablations, the evidence consistently supports that improving routing robustness is a practical way to make sparse MoE capacity more reliable, not merely more balanced.

\section{Related work}
\label{related-work}

In this section, we discuss the differences among existing MoE training paradigms. We first review methods designed to improve load balancing in MoE training. We then provide an overview of distributionally robust optimization (DRO), which forms the foundation of DRMoET.


\textbf{Mixture-of-Experts (MoE) Training.}
Sparsely-gated MoE layers \citep{shazeer2017outrageously} scale language models by routing each token to a few feed-forward \emph{experts}. \emph{Load-balancing} objectives enhance training stability and encourage uniform expert utilization.  GShard \citep{lepikhin2021} adds a KL-style auxiliary loss with strict capacity limits, whereas Switch Transformer \citep{fedus2022switch} simplifies routing to top-1 selection, reducing both communication and instability.  
Later work formulates routing as balanced assignment: BASE Layers \citep{lewis2021base} approximately solve a linear assignment, \citep{wang2024auxiliary} adds a learned bias vector to the router logits, and Expert-Choice routing lets experts select tokens \citep{zhou2022mixture}. \citet{wu2024multi} introduce \emph{Multi-Head MoE (MH-MoE)}, which splits each input token into $h$ learned sub-tokens before routing. These methods make sparse MoE training practical, but primarily control \emph{which} experts receive tokens. DRMoET is orthogonal: it leaves routing intact and improves the quality of experts that may be selected under plausible but suboptimal routing.

\textbf{Distributionally Robust Optimization (DRO).}
DRO minimizes the \emph{worst-case} loss over an uncertainty set~$\mathcal U$ around the empirical distribution.  $f$-divergence balls yield variance-regularized objectives \citep{duchi2019variance}, while CVaR arises as an $\alpha$-quantile special case.  Group DRO \citep{sagawa2019distributionally} targets the hardest predefined demographic or topic group, and topic-level CVaR for language modeling \citep{oren2019distributionally} lowers perplexity on under-represented topics. Streaming losses can stabilize high-variance group estimates across batches \citep{wen2022distributionally}. DRMoET transfers this principle to MoE training, where the groups are not external labels but layer-specific expert outcomes induced by top-$k$ routing and measured with activation-weighted attribution.

\section{Preliminaries}
\label{preliminaries}

In this section we review the core elements of sparse MoE architectures, the challenge of expert load imbalance, balancing strategies and extensions that motivate our robust training framework.

\subsection{MoE Architectures}
A MoE layer replaces a single dense feedforward network with a collection of $n$ parallel expert subnetworks $\{\mathrm{FFN}_i\}_{i=1}^n$. For each token embedding $\mathbf{x}\in\mathbb{R}^d$, a router produces scores over experts. Let $\phi(\mathbf{x})\in\mathbb{R}^n$ denote the router logits, where $\phi_i(\mathbf{x})$ is the scalar score for expert $i$. With optional Gumbel noise $\mathbf{g}$ and temperature $\tau>0$, define the selected expert set and sparse routing weight as
\[
\mathcal{T}(\mathbf{x}) = \mathrm{TopK}_k\bigl(\phi(\mathbf{x})+\mathbf{g}\bigr),
\qquad
F_i(\mathbf{x})
= \frac{\exp((\phi_i(\mathbf{x})+g_i)/\tau)\,\mathbf{1}[i\in\mathcal{T}(\mathbf{x})]}
{\sum_{r\in\mathcal{T}(\mathbf{x})}\exp((\phi_r(\mathbf{x})+g_r)/\tau)},
\]
where $g_i=-\ln(-\ln u_i)$ with $u_i\sim\mathcal{U}(0,1)$ when noisy routing is used. Thus $F_i(\mathbf{x})=0$ for unselected experts and the MoE layer output is
\[
\mathbf{o}(\mathbf{x})
\;=\;
\sum_{i\in\mathcal{T}(\mathbf{x})} F_{i}(\mathbf{x})\,\mathrm{FFN}_i(\mathbf{x}).
\]

Let $t$ index the minibatch or training iteration, and let $\mathcal{X}^{(t)}$ be the multiset of all tokens in the minibatch with $N_t=|\mathcal{X}^{(t)}|$. We distinguish soft routing mass from hard selected-token frequency:
\[
P_i^{(t)} \;=\; \frac{1}{N_t}\sum_{\mathbf{x}\in\mathcal{X}^{(t)}} F_i(\mathbf{x}),
\qquad
f_i^{(t)} \;=\; \frac{1}{N_t}\sum_{\mathbf{x}\in\mathcal{X}^{(t)}} \mathbf{1}\!\left[i\in\mathcal{T}(\mathbf{x})\right].
\]
Here $N_tP_i^{(t)}$ is the expected routing-probability mass assigned to expert $i$, whereas $N_tf_i^{(t)}$ is the discrete number of tokens for which expert $i$ is selected.

\subsection{Expert Load Imbalance and Balancing Strategies}
In practice, the router may concentrate most tokens on a few experts, leaving the majority under-utilized. This \emph{load imbalance} degrades both parameter efficiency and model performance, as rarely used experts receive insufficient gradient signals, and heavily used experts become bottlenecks.

A standard remedy is an \emph{auxiliary load-balancing loss}~\citep{fedus2022switch,lepikhin2021}, defined over the $t$-th minibatch in the common form
\[
  L_{\mathrm{aux}}^{(t)}
  \;=\;
  n\sum_{i=1}^n f_i^{(t)}P_i^{(t)},
  \qquad
  L^{(t)} = L_{\mathrm{task}}^{(t)} + \lambda L_{\mathrm{aux}}^{(t)},
\]
where $f_i^{(t)}$ is the hard selected-token fraction, $P_i^{(t)}$ is the average router probability mass, and $\lambda>0$ controls the strength of balancing. Increasing $\lambda$ encourages the routing statistics to be more uniform, but can harm the primary task when balancing gradients conflict with specialization.

To avoid such interference, \citet{wang2024auxiliary} propose \emph{Loss-Free Balancing}, which drops the auxiliary load-balancing loss. Instead, it learns a per-expert bias vector $\mathbf{b}\in\mathbb{R}^n$, adding $b_i$ to the router logit $\phi_i(\mathbf{x})$ before top-$k$ selection. After each batch, $b_i$ is updated by
\[
b_i \;\leftarrow\; b_i + u\,\mathrm{sign}(\bar n - n_i),
\]
where $n_i$ is the hard token count for expert $i$, $\bar n$ its average, and $u$ a small update rate. Thus overloaded experts are depressed and underloaded experts are elevated, dynamically steering the router toward balanced loads without any auxiliary gradient.

\section{Method}
\label{method}

While load balancing equalizes traffic, it can (i) interfere with the task objective and (ii) leave the model brittle when plausible but non-optimal experts are selected. We propose \textbf{DRMoET}, a \emph{distributionally robust} training procedure that optimizes layer-wise worst expert-attributed loss \emph{without adding additional auxiliary losses or changing the routing rule}. The key distinction from standard DRO is that the robustness groups are created by the MoE computation itself: each expert in each layer receives loss attribution through the routed tokens and activations it actually processes. This turns routing robustness into a task-loss reweighting problem, preserving the router's natural specialization while directing stronger signal to high-loss routing outcomes.

\subsection{Distributionally Robust MoE Training}
\label{sec:dro_moe}

\textbf{Experts as DRO groups.}
Consider a model $f_\theta$ with $L$ MoE layers, each containing $E$ experts and using top-$k$ routing. For each layer $l$, we introduce an adversarial weight vector $\mu_{l,\cdot}\in\Delta_E$, where $\Delta_E=\{\mu\in\mathbb{R}^E_{\ge0}:\sum_{i=1}^E\mu_i=1\}$. Let $R_{l,i}(\theta)$ denote the population risk attributed to expert $i$ at layer $l$. We study the layer-wise DRO saddle problem
\begin{equation}
\label{eq:saddle_main}
\min_{\theta}\;\max_{\mu\in(\Delta_E)^L}
\;F(\theta,\mu)
\;\triangleq\;
\sum_{l=1}^L\sum_{i=1}^E \mu_{l,i}\,R_{l,i}(\theta),
\end{equation}
where $(\Delta_E)^L$ is the product of $L$ simplices. Since $F(\theta,\mu)$ is linear in each $\mu_{l,\cdot}$, the inner maximization equals $\sum_l\max_i R_{l,i}(\theta)$, directly reducing the layer-wise worst expert-attributed risk.

\textbf{Activation-weighted loss attribution.}
At iteration $t$, for a minibatch $\mathcal{B}$ containing $N_t$ tokens, compute token losses $\ell^{(b)}_j(\theta)=\ell(f_\theta(x^{(b)}_j),y^{(b)}_j)$. For each MoE layer $l$, the gate selects top-$k$ experts $\mathcal{T}^{(b)}_{j,l}\subset[E]$ with routing probabilities $p_{l,i}(x^{(b)}_j)=\mathrm{softmax}_{i\in\mathcal{T}^{(b)}_{j,l}}(r_{l,i}(x^{(b)}_j))$. Let $h_{l,i}(x^{(b)}_j)$ be the pre-merge output activation of expert $i$. We define the detached activation-weighted credit
\begin{equation}
\label{eq:credit_main}
\tilde c_{l,i}(x^{(b)}_j)
\;\triangleq\;
\sg\!\left(p_{l,i}(x^{(b)}_j)\cdot \big\|h_{l,i}(x^{(b)}_j)\big\|_2\right),
\end{equation}
where $\sg(\cdot)$ denotes stop-gradient. The credit incorporates both routing probability and activation magnitude while preventing the attribution coefficient itself from becoming an additional router or activation-norm objective.

\begin{algorithm}[ht]
  \caption{Distributionally Robust Mixture-of-Experts Training}
  \label{alg:moe-dro}
  \begin{algorithmic}[1]
  \REQUIRE Model $f_\theta$ with $L$ MoE layers, $E$ experts per layer, top-$k$ routing
  \REQUIRE EMA decay $\beta$, dual step-size schedule $\eta_t$
  \ENSURE Learned parameters $\theta$
  \STATE Initialize $\mu_{l,i}^{(0)}\gets1/E$, $\hat{\ell}_{l,i}^{(0)}\gets0$ for all $l\in[L]$, $i\in[E]$
  \FOR{$t = 1$ to $T$}
    \FOR{minibatch $\mathcal{B} = \{(x^{(b)}, y^{(b)})\}_{b=1}^B$ with $N_t$ tokens}
      \STATE Compute token losses $\ell^{(b)}_j = \ell(f_\theta(x^{(b)}_j), y^{(b)}_j)$
      \STATE Collect gate logits $r_{l,i}(x^{(b)}_j)$ and expert outputs $h_{l,i}(x^{(b)}_j)$
      \FOR{each MoE layer $l = 1, \ldots, L$}
        \STATE $\mathcal{T}^{(b)}_{j,l} \gets \text{top-}k$ experts for token $j$ based on $r_{l,\cdot}(x^{(b)}_j)$
        \STATE $p_{l,i}(x^{(b)}_j) \gets \text{softmax}_{i \in \mathcal{T}^{(b)}_{j,l}}(r_{l,i}(x^{(b)}_j))$
        \STATE $\tilde c_{l,i}(x^{(b)}_j) \gets \sg\!\left(p_{l,i}(x^{(b)}_j)\|h_{l,i}(x^{(b)}_j)\|_2\right)$
        \STATE Compute $\ell^{(t)}_{l,i} \gets \frac{1}{N_t}\sum_{b,j}\tilde c_{l,i}(x^{(b)}_j)\ell^{(b)}_j\mathbf{1}[i\in\mathcal{T}^{(b)}_{j,l}]$ for all $i$
        \STATE $\hat{\ell}^{(t)}_{l,\cdot} \gets \beta\hat{\ell}^{(t-1)}_{l,\cdot} + (1 - \beta)\sg(\ell^{(t)}_{l,\cdot})$
        \STATE $\mu_{l,\cdot}^{(t)} \gets \softmax\!\left(\mu_{l,\cdot}^{(t-1)} + \eta_t\hat{\ell}^{(t)}_{l,\cdot}\right)$
      \ENDFOR
      \STATE $\mathcal{L}_{\text{DRO}} \gets \sum_{l=1}^L \sum_{i=1}^E \sg(\mu_{l,i}^{(t)})\ell^{(t)}_{l,i}$
      \STATE Backpropagate $\mathcal{L}_{\text{DRO}}$ and update $\theta$
    \ENDFOR
  \ENDFOR
  \end{algorithmic}
\end{algorithm}

We attribute a normalized loss mass to expert $(l,i)$:
\begin{equation}
\label{eq:elli_main}
\ell^{(t)}_{l,i}
\;\triangleq\;
\frac{1}{N_t}\sum_{(b,j)\in\mathcal{B}}
\tilde c_{l,i}(x^{(b)}_j)\cdot \ell^{(b)}_j(\theta)\cdot \mathbf{1}\!\left[i\in\mathcal{T}^{(b)}_{j,l}\right].
\end{equation}
Normalizing by token count keeps the DRO term on a stable scale across micro-batch sizes and avoids changing the relative weighting of the standard task and router-regularization terms used by the training recipe. The population risk is $R_{l,i}(\theta)\triangleq\mathbb{E}[\mathcal{A}_{l,i}(x,y;\theta)]$, where $\mathcal{A}_{l,i}$ is the corresponding single-example, token-normalized analogue.

\textbf{Primal--dual optimization.}
We maintain an EMA of attributed losses,
\begin{equation}
\label{eq:ema_main}
\hat{\ell}^{(t)}_{l,i}
\;=\;
\beta\,\hat{\ell}^{(t-1)}_{l,i} + (1-\beta)\,\sg(\ell^{(t)}_{l,i}),
\qquad \beta\in(0,1),
\end{equation}
and update the dual variables by the entropy-regularized softmax rule used in the implementation:
\begin{equation}
\label{eq:mu_update_main}
\mu^{(t)}_{l,\cdot}
\;=\;
\softmax\!\left(\mu^{(t-1)}_{l,\cdot}+\eta_t\,\hat{\ell}^{(t)}_{l,\cdot}\right).
\end{equation}
This probability-space softmax rule is analyzed in Appendix~\ref{sec:DRMoET-theory}. There, we show that the implemented update can be viewed as entropic mirror ascent on a regularized robust objective, rather than as the log-space multiplicative-weights update $\softmax(\log\mu+\eta\hat\ell)$. Larger EMA losses increase the relative dual mass assigned to the corresponding experts while the entropy term keeps the update smooth. Experiments use the constant parameterization $\eta_t=\eta_0/\sqrt{E}$, reported as $\eta$. The primal objective at step $t$ is
\begin{equation}
\label{eq:ldro_main}
\mathcal{L}^{(t)}_{\mathrm{DRO}}
\;\triangleq\;
\sum_{l=1}^L\sum_{i=1}^E \sg(\mu^{(t)}_{l,i})\,\ell^{(t)}_{l,i},
\end{equation}
after the EMA and dual weights have been updated outside autograd from the current detached attribution statistics. Thus DRMoET operates through normalized task-loss reweighting with detached attribution coefficients, requiring only minimal code changes with negligible overhead.

\subsection{Convergence Guarantees}
\label{sec:convergence}

We analyze Algorithm~\ref{alg:moe-dro} for the implemented probability-space
softmax update
\[
\mu^{(t+1)}_{l}
=
\softmax\!\left(\mu^{(t)}_{l}+\eta_t\hat{\ell}^{(t)}_{l}\right).
\]
This update is not the log-space multiplicative-weights rule
$\softmax(\log\mu^{(t)}_l+\eta_t\hat{\ell}^{(t)}_l)$. Instead, as shown in
Appendix~\ref{sec:DRMoET-theory}, for the constant-dual-step setting
$\eta_t=\eta$, it is exactly entropic mirror ascent on a regularized robust
objective. Define
\[
\Omega(\mu_l)
=
-\sum_{i=1}^E\mu_{l,i}\log\mu_{l,i}
+
\frac12\|\mu_l\|_2^2,
\qquad
F_\eta(\theta,\mu)
=
F(\theta,\mu)
+
\frac1\eta\sum_{l=1}^L\Omega(\mu_l),
\]
and
\[
\Phi_\eta(\theta)
=
\max_{\mu\in(\Delta_E)^L}F_\eta(\theta,\mu).
\]

\begin{theorem}[Convergence to stationarity of regularized DRMoET]
\label{thm:regularized-main}
Under the assumptions in Appendix~\ref{sec:DRMoET-theory}, let
$\gamma_t=\gamma_0/\sqrt{t}$ and sample
$\tau$ with probability
$\Pr(\tau=t)=\gamma_t/\sum_{s=1}^T\gamma_s$. Then Algorithm~\ref{alg:moe-dro}
satisfies
\[
\mathbb{E}\!\left[
\|\nabla\Phi_\eta(\theta^{(\tau)})\|_2^2
\right]
\le
\widetilde{\mathcal{O}}(T^{-1/2})
+
\mathcal{O}\!\left(
\varepsilon_{\mathrm{bias}}^2
+
L_{\theta\mu}^2L^2E^2\eta^2
\varepsilon_{\mathrm{ema},T}^2
\right).
\]
In the idealized population-loss setting, where
$\varepsilon_{\mathrm{bias}}=0$ and
$\varepsilon_{\mathrm{ema},T}\to0$, the expected stationarity measure
vanishes at rate $\widetilde{\mathcal{O}}(T^{-1/2})$.
\end{theorem}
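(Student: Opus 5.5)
The plan is to treat Algorithm~\ref{alg:moe-dro} as biased stochastic gradient descent on $\Phi_\eta$, where the dual iterate $\mu^{(t)}$ inexactly tracks the maximizer $\mu^\star(\theta)=\arg\max_{\mu}F_\eta(\theta,\mu)$.

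\textbf{Step 1: structure of the inner problem.} I will first show that $\mu\mapsto F_\eta(\theta,\mu)$ is strongly concave on $(\Delta_E)^L$, so that $\mu^\star(\theta)$ is unique. On the tangent space $\{v:\sum_i v_i=0\}$, the Hessian of $\Omega$ gives the quadratic form $-\sum_i v_i^2(1/\mu_i-1)$. Let $a$ be the total positive part of $v$, which equals the total negative part. Cauchy--Schwarz gives
\[
\sum_i v_i^2/\mu_i\ge\Bigl(\sum_i|v_i|\Bigr)^2=4a^2\ge 2\|v\|_2^2 .
\]
Hence $\Omega$ is $1$-strongly concave in $\ell_2$ on each simplex, and $F_\eta$ is $(1/\eta)$-strongly concave in $\mu$.

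Next I will check that the optimality condition for $\mu_l^\star$ is exactly the fixed-point equation $\mu_l^\star=\softmax(\mu_l^\star+\eta R_l(\theta))$. The KKT condition for maximizing $\langle R_l,\nu\rangle+\eta^{-1}\Omega(\nu)$ over the simplex reads $\log\nu_i=\eta R_{l,i}+\nu_i+\text{const}$. This matches the softmax fixed point and justifies the ``entropic mirror ascent'' reading of the update.

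Danskin's theorem then gives
\[
\nabla\Phi_\eta(\theta)=\sum_{l,i}\mu^\star_{l,i}(\theta)\nabla R_{l,i}(\theta).
\]
Strong concavity, together with the cross-Lipschitz assumption $L_{\theta\mu}$, makes $\mu^\star$ Lipschitz with constant $\eta L_{\theta\mu}$. It follows that $\Phi_\eta$ is $L_\Phi$-smooth with $L_\Phi=O(L_\theta+\eta L_{\theta\mu}^2)$.

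\textbf{Step 2: dual tracking.} The softmax Jacobian $\mathrm{diag}(s)-ss^\top$ has operator norm at most $1/2$. Therefore the map $\mu_l\mapsto\softmax(\mu_l+\eta\hat\ell_l)$ is a $\tfrac12$-contraction in $\mu_l$ and $\tfrac\eta2$-Lipschitz in $\hat\ell_l$. Comparing one step of the algorithm against the fixed point at $\theta^{(t)}$, I expect a recursion of the form
\[
\|\mu^{(t+1)}-\mu^\star(\theta^{(t+1)})\|
\le\tfrac12\|\mu^{(t)}-\mu^\star(\theta^{(t)})\|
+\tfrac\eta2\|\hat\ell^{(t)}-R(\theta^{(t)})\|
+\eta L_{\theta\mu}\|\theta^{(t+1)}-\theta^{(t)}\|.
\]
Unrolling this recursion bounds the tracking error by three contributions:
\begin{itemize}
\item a geometrically decaying initial term;
\item an EMA/bias term of size $O(\eta\sqrt{LE}\,\varepsilon_{\mathrm{ema},T})$, obtained by summing over the $LE$ coordinates;
\item a drift term of size $O(\eta L_{\theta\mu}\gamma_t G)$, where $G$ bounds the stochastic gradients.
\end{itemize}

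\textbf{Step 3: descent.} The primal step uses $g_t=\sum_{l,i}\mu^{(t)}_{l,i}\nabla\ell^{(t)}_{l,i}$. I will decompose
\[
g_t-\nabla\Phi_\eta(\theta^{(t)})=\text{(zero-mean noise)}+\text{(attribution bias, at most }\varepsilon_{\mathrm{bias}})+\text{(dual error)}.
\]
The dual error is bounded by $G_R\sqrt{LE}\,\|\mu^{(t)}-\mu^\star\|$. Plugging the decomposition into the smoothness inequality for $\Phi_\eta$ and applying Young's inequality to the bias terms gives
\[
\tfrac{\gamma_t}{2}\mathbb{E}\|\nabla\Phi_\eta\|^2\le\mathbb{E}[\Phi_\eta(\theta^{(t)})-\Phi_\eta(\theta^{(t+1)})]+L_\Phi\gamma_t^2\sigma^2+\gamma_t\,O\bigl(\varepsilon_{\mathrm{bias}}^2+L_{\theta\mu}^2L^2E^2\eta^2\varepsilon_{\mathrm{ema},T}^2\bigr)+\gamma_t\cdot(\text{drift}).
\]
The squared tracking bound supplies the $L^2E^2\eta^2$ factor. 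Summing over $t$, using that $\Phi_\eta$ is bounded because $R$ and $\Omega$ are bounded, and dividing by $\sum_t\gamma_t\asymp\sqrt T$ gives the randomized-iterate bound. The terms $\sum_t\gamma_t^2=O(\log T)$ produce the $\widetilde{\mathcal O}(T^{-1/2})$ rate. The drift contribution is $O(\gamma_t^2)$ and is absorbed into the same term.

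\textbf{Main obstacle.} I expect the hardest step to be the tracking argument in Step 2. It has to be coupled with the nonstationary $\theta$-drift and with the fact that $\hat\ell^{(t)}$ is an EMA of past, stale losses rather than of $R(\theta^{(t)})$. My first attempt will be to simply define $\varepsilon_{\mathrm{ema},T}$ as the averaged EMA-to-population discrepancy, so that it absorbs the staleness. I will then rely on the fixed $\tfrac12$ contraction factor, which is independent of $\eta$, to keep the unrolled sum bounded by a constant multiple of its per-step inputs.
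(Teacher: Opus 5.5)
Your proposal is correct and follows essentially the same route as the paper. The $\tfrac12$-contraction of softmax gives the tracking recursion $e_{t+1}\le\tfrac12 e_t+\tfrac{\eta}{2}\xi_t+(\text{drift})\,\gamma_t\|g^{(t)}\|_2$, which is unrolled using the monotonicity of $\gamma_t$; via Danskin and the bias bound $\|\delta_t\|_2\le\varepsilon_{\mathrm{bias}}+L_{\theta\mu}LE\,e_t$, this feeds into the smoothness descent inequality for $\Phi_\eta$. You differ only in auxiliary details: you prove uniqueness of $\mu^\star$ and smoothness of $\Phi_\eta$ from $1/\eta$-strong concavity and bound the target drift through $L_{\theta\mu}$ (with harmless $\sqrt{LE}$ factors), whereas the paper assumes $L_\Phi$-smoothness in Assumption (A2) and bounds the drift by $\eta L_R\gamma_t\|g^{(t)}\|_2$ directly from the fixed-point contraction, which also yields uniqueness.
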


The guarantee is for the entropy-regularized robust objective
$\Phi_\eta$, which approximates the original hard worst-expert objective
$\Phi(\theta)=\sum_l\max_i R_{l,i}(\theta)$ with explicit bias
\[
\Phi(\theta)
\le
\Phi_\eta(\theta)-\frac{L}{2\eta}
\le
\Phi(\theta)+\frac{L\log E}{\eta}.
\]
Thus DRMoET has a convergence guarantee for a smooth regularized
approximation to the layer-wise DRO objective, while the approximation error
to the hard max is controlled by $L\log E/\eta$.

\section{Experiments}
\label{sec:experiments}

We evaluate DRMoET on large-scale, from-scratch pretraining at two model scales. All architectures, data pipelines, and evaluation protocols follow FLAME-MoE~\citep{kang2025flame} to ensure direct comparability. Beyond downstream accuracy, we conduct targeted analyses to validate the routing-robustness mechanism: expert-level loss distributions, forced-misrouting probes, expert-tier/OOD checks, functional specialization, and compact hyperparameter ablations; training-throughput results are deferred to the appendix.

\subsection{Pretraining Setup}
\label{sec:exp-setup}

\textbf{Data.}
All models are trained on data sampled from DataComp-LM (DCLM)~\citep{li2024datacomp}, following the FLAME-MoE data pipeline with the standard next-token prediction objective. The \texttt{FLAME-MoE-290M-746M} experiments use approximately 33.5B tokens. The main \texttt{FLAME-MoE-1.7B-10.3B} comparison uses approximately 67B tokens. Within each model scale and token budget, the compared methods use the same architecture and data pipeline.

\textbf{Model.}
We train two configurations from scratch. The \texttt{FLAME-MoE-290M-746M} model is a decoder-only Transformer with $E{=}32$ experts per MoE layer and top-$k$ routing with $k{=}8$, of which 2 are shared (always active) experts~\citep{kang2025flame}. The \texttt{FLAME-MoE-1.7B-10.3B} model uses $E{=}64$ experts to test whether gains persist at higher scale.

\textbf{Optimization.}
We adopt the FLAME-MoE training recipe: Adam optimizer with maximum learning rate $3\times10^{-4}$, global batch size 1024, sequence length 2048, and a Warmup--Stable--Decay (WSD) schedule with warmup ratio 0.01 and decay ratio 0.1. The global batch is formed from per-GPU micro-batches; the 290M runs use micro-batch size 16, while the 1.7B runs use micro-batch size 8 because of memory constraints. Both the baseline and DRMoET use the same auxiliary load-balancing loss and z-loss coefficients of $1\times10^{-3}$.

\textbf{Baselines.}
At each scale, we compare against (i) standard FLAME-MoE training and (ii) auxiliary-loss-free balancing baselines~\citep{wang2024auxiliary} using the Megatron-LM implementation. For the 290M model, we report $u\in\{10^{-3},10^{-2}\}$. At the 1.7B-active scale and 67B-token budget, we report the $u{=}10^{-2}$ setting, which is independently tuned with a larger batch size.

\subsection{Evaluation}
\label{sec:eval}

We report 0-shot performance following FLAME-MoE~\citep{kang2025flame}; ReCoRD is reported as F1, all others as accuracy. We evaluate on ARC-E / ARC-C, HellaSwag, PIQA, WinoGrande, SciQ, and ReCoRD. 

\textbf{ARC-Easy / ARC-Challenge}~\citep{clark2018think} evaluate grade-school science knowledge and deeper scientific reasoning, respectively.
\textbf{HellaSwag}~\citep{zellers2019hellaswag} tests grounded commonsense inference.
\textbf{PIQA}~\citep{bisk2020piqa} probes physical commonsense.
\textbf{WinoGrande}~\citep{sakaguchi2021winogrande} evaluates commonsense through pronoun resolution.
\textbf{SciQ}~\citep{welbl2017crowdsourcing} assesses scientific reasoning across biology, chemistry, and physics.
\textbf{ReCoRD}~\citep{zhang2018record} evaluates reading comprehension requiring entity reasoning over long contexts.

\subsection{Main Results}
\label{sec:main-results}

\begin{table*}[t]
\centering
\scriptsize
\setlength{\tabcolsep}{2pt}
\renewcommand{\arraystretch}{0.95}
\resizebox{\textwidth}{!}{%
\begin{tabular}{llccccccc|c}
\toprule
\textbf{Scale} & \textbf{Method} & \textbf{ARC-C} & \textbf{ARC-E} & \textbf{HellaSwag} & \textbf{PIQA} & \textbf{WinoGrande} & \textbf{SciQ} & \textbf{ReCoRD*} & \textbf{Average} \\
\midrule

\rowcolor{smallscalehead}
\multicolumn{2}{l}{\textbf{\texttt{290M-746M}}} & \multicolumn{8}{r}{Training tokens: 33.5B} \\
\rowcolor{smallscale}
& FLAME-MoE & 0.2329 & 0.5295 & 0.3447 & 0.6697 & 0.5012 & 0.8020 & 0.7023 & 0.5403 \\
\rowcolor{smallscale}
& Aux-free ($u{=}10^{-3}$) & 0.2261 & 0.4878 & 0.3283 & 0.6670 & 0.4949 & 0.7100 & 0.6445 & 0.5084 \\
\rowcolor{smallscale}
& Aux-free ($u{=}10^{-2}$) & 0.2184 & 0.4895 & 0.3301 & 0.6627 & 0.4949 & 0.6990 & 0.6440 & 0.5055 \\
\rowcolor{smallscale}
& DRMoET ($\eta{=}0.1$) & \textbf{0.2338} & 0.5522 & \textbf{0.3472} & 0.6844 & \textbf{0.5209} & 0.7950 & \textbf{0.7115} & 0.5493 \\
\rowcolor{smallscale}
& DRMoET ($\eta{=}0.01$) & 0.2270 & 0.5400 & 0.3470 & 0.6839 & 0.4941 & 0.8180 & 0.7007 & 0.5444 \\
\rowcolor{smallscale}
& DRMoET ($\eta{=}0.001$) & 0.2295 & \textbf{0.5682} & 0.3452 & 0.6888 & 0.5051 & \textbf{0.8190} & 0.7105 & \textbf{0.5523} \\
\rowcolor{smallscale}
& DRMoET ($\eta{=}0.0001$) & 0.2193 & 0.5556 & 0.3461 & \textbf{0.6970} & 0.5162 & 0.8060 & 0.7037 & 0.5491 \\

\midrule
\rowcolor{largescalehead}
\multicolumn{2}{l}{\textbf{\texttt{1.7B-10.3B}}} & \multicolumn{8}{r}{Training tokens: 67B} \\
\rowcolor{largescale}
& FLAME-MoE & 0.3481 & 0.6970 & 0.4942 & 0.7563 & 0.5904 & 0.9050 & \textbf{0.8467} & 0.6625 \\
\rowcolor{largescale}
& Aux-free ($u{=}10^{-2}$) & 0.3148 & 0.6843 & 0.4667 & 0.7459 & 0.5722 & 0.8930 & 0.8248 & 0.6431 \\
\rowcolor{largescale}
& DRMoET ($\eta{=}0.01$) & \textbf{0.3805} & \textbf{0.7210} & 0.4973 & \textbf{0.7666} & \textbf{0.6235} & 0.9020 & 0.8457 & \textbf{0.6767} \\
\rowcolor{largescale}
& DRMoET ($\eta{=}0.001$) & 0.3652 & 0.7024 & \textbf{0.4996} & 0.7628 & 0.6148 & \textbf{0.9110} & 0.8452 & 0.6716 \\

\bottomrule
\end{tabular}%
}
\caption{\textbf{Main results at two scales.} The two blocks use approximately 33.5B and 67B training tokens, respectively. All DRMoET rows use $\beta=0.999$. Averages are over the seven displayed tasks. Best per-task and average within each block are in \textbf{bold}. *ReCoRD is reported as F1.}
\label{tab:main_results}
\end{table*}

Tab.~\ref{tab:main_results} summarizes downstream performance at both model scales. At \texttt{290M-746M}, DRMoET improves over FLAME-MoE for every $\eta$ setting, with the best average ($\eta=0.001$) increasing from 0.5403 to 0.5523. The sweep shows predictable task tradeoffs: larger $\eta$ favors ReCoRD and WinoGrande, while $\eta=0.001$ gives the strongest average and the best ARC-E/SciQ scores. 

\Needspace{15\baselineskip}
\begin{wraptable}{r}{0.48\textwidth}
\centering
\small
\setlength{\tabcolsep}{3pt}
\renewcommand{\arraystretch}{1.0}
\begin{tabular}{@{}lccc@{}}
\toprule
\textbf{Metric} & \textbf{Baseline} & \textbf{DRMoET} & \textbf{$\Delta$ (\%)} \\
\midrule
Worst loss & 2.6930 & \textbf{2.6379} & $-2.05$ \\
Best loss  & 2.1149 & \textbf{2.0722} & $-2.02$ \\
Mean loss  & 2.3712 & \textbf{2.3699} & $-0.05$ \\
Range      & 0.5781 & \textbf{0.5656} & $-2.16$ \\
Std.       & 0.1516 & \textbf{0.1386} & $-8.58$ \\
CV         & 0.0639 & \textbf{0.0585} & $-8.45$ \\
\bottomrule
\end{tabular}
\caption{Expert-loss statistics at convergence.}
\label{tab:robustness_combined}
\end{wraptable}
Gains persist at \texttt{1.7B-10.3B} under the 67B-token budget: DRMoET with $\eta=0.01$ improves the seven-task average from 0.6625 to 0.6767, an absolute gain of 1.42 percentage points and a relative improvement of 2.14\%. This configuration improves five of seven tasks; $\eta=0.001$ reaches 0.6716 and improves six of seven tasks. FLAME-MoE retains the highest ReCoRD score.

\textbf{Comparison with auxiliary-loss-free balancing.}
At 290M-active scale, both auxiliary-loss-free baselines underperform FLAME-MoE baseline, as well as DRMoET results. This could be due to the small batch size we use during training. At the 1.7B-active scale, the 67B-token auxiliary-loss-free baseline with $u{=}10^{-2}$, independently tuned with a larger batch, reaches 0.6431. The result remains 1.94 points below FLAME-MoE and 3.36 points below the best DRMoET result.

\subsection{Routing Robustness Analysis}
\label{sec:robustness}

The preceding results establish empirical performance, we now shift to understand how DRMoET improves robustness to suboptimal routing.

\textbf{Expert-level loss distributions and forced-misrouting probe.}
We first examine whether DRMoET narrows the quality gap across routing outcomes on a 52M tokens validation set, and then directly test robustness by forcing suboptimal expert selection. For the expert-level analysis, we measure per-expert loss at convergence on the best 290M model.

As shown in Tab.~\ref{tab:robustness_combined}, DRMoET reduces the worst-expert loss by 2.05\% while keeping mean expert loss nearly unchanged ($-0.05\%$). The loss standard deviation and coefficient of variation decrease by over 8\%, indicating substantially more uniform expert quality.

\Needspace{10\baselineskip}
\begin{wraptable}{r}{0.48\textwidth}
\centering
\small
\setlength{\tabcolsep}{3pt}
\renewcommand{\arraystretch}{1.0}
\begin{tabular}{@{}lccc@{}}
\toprule
\textbf{Model} & \textbf{Normal} & \textbf{Misrouting} & \textbf{Deg.} \\
\midrule
Baseline & 3.187 & 7.249 & $2.27\times$ \\
DRMoET & \textbf{3.182} & \textbf{7.071} & \textbf{$2.22\times$} \\
\bottomrule
\end{tabular}
\caption{Forced mid-$k$ misrouting.}
\label{tab:misrouting}
\end{wraptable}
For the misrouting probe, we replace each token's top-$k$ experts with its middle-$k$ experts (ranked by routing probability), simulating a realistic scenario where the router makes plausible but non-optimal selections, and measure per-token loss across six evaluation domains.

Tab.~\ref{tab:misrouting} directly validates the core claim: under forced misrouting, DRMoET exhibits 4.3\% less loss degradation than the baseline, confirming that the model is more robust when routing is imperfect.

\textbf{Expert tier analysis and out-of-distribution robustness.}
To understand which experts benefit most, we rank all experts by total routing traffic and partition them into top, mid, and bottom tiers. We further evaluate expert loss uniformity under distribution shift using 60,000 OOD examples (4.2M tokens) from six benchmark families not seen during training.

Tab.~\ref{tab:tier_ood_combined}(a) shows that mid-tier experts improve the most, consistent with DRMoET strengthening the experts that matter most under plausible but non-optimal routing. Tab.~\ref{tab:tier_ood_combined}(b) extends this to unseen data: mean loss is nearly unchanged, while the loss range, standard deviation, and coefficient of variation (CV=std./mean) decrease, indicating less dispersed expert losses.

\begin{table}[!htbp]
\centering
\scriptsize
\setlength{\tabcolsep}{2.5pt}
\renewcommand{\arraystretch}{0.95}
\begin{minipage}[t]{0.46\linewidth}
\centering
\begin{tabular}{@{}lccc@{}}
\toprule
\textbf{Tier} & \textbf{FLAME-MoE} & \textbf{DRMoET} & \textbf{Improvement(\%)} \\
\midrule
Top & 3.176 & 3.188 & $-0.4\%$ \\
Mid & 3.100 & 3.077 & $+0.7\%$ \\
Bottom & 3.072 & 3.058 & $+0.4\%$ \\
\bottomrule
\end{tabular}

\vspace{2pt}
(a) Per-tier loss.
\end{minipage}\hfill
\begin{minipage}[t]{0.40\linewidth}
\centering
\begin{tabular}{@{}lccc@{}}
\toprule
\textbf{Metric} & \textbf{FLAME-MoE} & \textbf{DRMoET} & \textbf{$\Delta (\%)$} \\
\midrule
Mean loss & 3.151 & 3.157 & $+0.2\%$ \\
Range & 1.018 & 0.947 & $-7.0\%$ \\
Std. & 0.241 & 0.236 & $-2.1\%$ \\
CV & 0.076 & 0.075 & $-1.3\%$ \\
\bottomrule
\end{tabular}

\vspace{2pt}
(b) OOD uniformity.
\end{minipage}
\caption{\textbf{Expert tier and OOD analysis.} Mid-tier experts improve the most, and under distribution shift DRMoET yields more uniform expert quality while keeping mean loss nearly unchanged.}
\label{tab:tier_ood_combined}
\end{table}

\textbf{Functional specialization.}
A natural concern is whether improved routing robustness reduces expert specialization. We test this with 10k examples per benchmark, grouped into six functional domains, and compute specialization metrics via teacher-forced inference.

Here Max Selectivity is each expert's largest domain-selection share, $\Delta S$ is the gap between its top and second domain-selection shares, $\Delta Q$ is the competence advantage on its preferred domain, and $I(E;G)$ is mutual information between selected expert $E$ and functional domain $G$. All four metrics improve under DRMoET (Tab.~\ref{tab:specialization}), indicating stronger domain--expert association and better performance on each expert's specialized domains.

\subsection{Ablation Studies}
\label{sec:ablations}

\textbf{Activation-weighted vs.\ raw-probability credit.}
We compare activation-weighted credit (Eq.~\ref{eq:credit_main}) against a variant using only raw routing probability. Activation-weighted credit improves the best average from 0.6138 to 0.6263 at matched $\eta$ (Tab.~\ref{tab:ablation_combined}a), supporting attribution that reflects both selection and response magnitude. This is also consistent with the MoE forward computation: routing probability scales the expert activation, so credit should reflect the routed activation rather than probability alone.

\textbf{Top-$k$ routing sparsity and EMA decay $\beta$.}
We ablate routing sparsity by increasing from top-6 to top-12 (excluding shared experts), and sweep $\beta$ at fixed $\eta=0.001$.

Under denser routing (Tab.~\ref{tab:ablation_combined}b), DRMoET still improves, though the gain is smaller because larger $k$ reduces sensitivity to any single expert choice. In the $\beta$ sweep (Tab.~\ref{tab:ablation_combined}c), $\beta{=}0.9$ is strongest on ARC-E and HellaSwag, $\beta{=}0.99$ on PIQA, and $\beta{=}0.999$ on SciQ and ReCoRD while also giving the best average. We therefore use $\beta{=}0.999$ as the default.

\Needspace{15\baselineskip}
\begin{wraptable}{r}{0.48\textwidth}
\centering
\small
\setlength{\tabcolsep}{3pt}
\begin{tabular}{@{}lr@{}}
\toprule
\textbf{Metric} & \makecell[r]{\textbf{Improvement}\\\textbf{(\%)}} \\
\midrule
Avg Max Selectivity & $+1.0\%$ \\
\makecell[l]{Avg $\Delta S$\\(specialization margin)} & $+1.2\%$ \\
\makecell[l]{Avg $\Delta Q$\\(competence advantage)} & $+30.2\%$ \\
Avg MI $I(E;G)$ & $+12.9\%$ \\
\bottomrule
\end{tabular}
\caption{Domain--expert specialization metrics. DRMoET improves all measures.}
\label{tab:specialization}
\end{wraptable}
\textbf{Interaction with load balancing.}
Removing the standard load-balancing loss from DRMoET while retaining the remaining 290M training configuration yields a seven-task average of 0.5448, compared with 0.5403 for standard FLAME-MoE and 0.5523 for DRMoET with balancing (Appendix~\ref{app:load-balancing-ablation}). The combined configuration performs best, supporting the use of DRMoET as a complement to balancing. This comparison does not include FLAME-MoE without balancing, so it does not isolate DRMoET's incremental effect within a no-balancing recipe.

\begin{table}[!t]
\centering
\scriptsize
\setlength{\tabcolsep}{3.0pt}
\renewcommand{\arraystretch}{0.95}
\begin{tabular}{@{}llcccccc@{}}
\toprule
\textbf{Ablation} & \textbf{Setting} & \textbf{ARC-E} & \textbf{HellaSwag} & \textbf{PIQA} & \textbf{SciQ} & \textbf{ReCoRD*} & \textbf{Average} \\
\midrule
\multirow{5}{*}{Credit}
& FLAME-MoE & 0.5295 & 0.3447 & 0.6697 & 0.8020 & 0.7023 & 0.6096 \\
& Activation weighted, $\eta=10^{-2}$ & 0.5400 & \textbf{0.3470} & 0.6839 & 0.8180 & 0.7007 & 0.6179 \\
& Activation weighted, $\eta=10^{-3}$ & \textbf{0.5682} & 0.3452 & \textbf{0.6888} & \textbf{0.8190} & \textbf{0.7105} & \textbf{0.6263} \\
& Raw probability, $\eta=10^{-2}$ & 0.5455 & 0.3384 & 0.6839 & 0.7920 & 0.6938 & 0.6107 \\
& Raw probability, $\eta=10^{-3}$ & 0.5421 & 0.3435 & 0.6861 & 0.7910 & 0.7065 & 0.6138 \\
\midrule
\multirow{3}{*}{Top-12 routing}
& FLAME-MoE & 0.5459 & 0.3506 & \textbf{0.6991} & 0.7950 & 0.7033 & 0.6188 \\
& $\eta=10^{-3}$ & \textbf{0.5640} & \textbf{0.3523} & 0.6910 & \textbf{0.8130} & \textbf{0.7056} & \textbf{0.6252} \\
& $\eta=10^{-2}$ & 0.5619 & 0.3515 & \textbf{0.6991} & 0.8110 & 0.6998 & 0.6247 \\
\midrule
\multirow{5}{*}{EMA decay $\beta$}
& FLAME-MoE & 0.5295 & 0.3447 & 0.6697 & 0.8020 & 0.7023 & 0.6096 \\
& $\beta=0.9$ & 0.5568 & \textbf{0.3477} & 0.6915 & 0.7970 & 0.7048 & 0.6196 \\
& $\beta=0.99$ & 0.5408 & 0.3455 & \textbf{0.6937} & 0.7940 & 0.7022 & 0.6152 \\
& $\beta=0.999$ & \textbf{0.5682} & 0.3452 & 0.6888 & \textbf{0.8190} & \textbf{0.7105} & \textbf{0.6263} \\
& $\beta=1$ & 0.5467 & 0.3465 & 0.6844 & 0.7850 & 0.7068 & 0.6139 \\
\bottomrule
\end{tabular}
\caption{\textbf{Ablation studies.}
We ablate credit assignment, routing sparsity, and EMA smoothing. Averages are over the five displayed tasks within each ablation block. *ReCoRD is reported as F1.}
\label{tab:ablation_combined}
\end{table}

\subsection{Training throughput}
\label{sec:throughput}

\textbf{Computational complexity.}
Let $N_t$ be the number of tokens per step, $d$ the hidden size, and $d_{\mathrm{ff}}$ the expert FFN width. Per MoE layer, the selected expert FFNs require $\mathcal{O}(N_t k d d_{\mathrm{ff}})$ operations. DRMoET reuses their outputs and routing probabilities: computing activation norms costs $\mathcal{O}(N_t k d)$, sparse loss attribution costs $\mathcal{O}(N_t k)$, and the EMA and softmax updates cost $\mathcal{O}(E)$. Across $L$ layers, the additional arithmetic is therefore $\mathcal{O}(L N_t k d + L E)$, compared with $\mathcal{O}(L N_t k d d_{\mathrm{ff}})$ for the expert FFNs. The relative overhead is $\mathcal{O}(1/d_{\mathrm{ff}} + E/(N_t k d d_{\mathrm{ff}}))$, reducing to $\mathcal{O}(1/d_{\mathrm{ff}})$ when $E=\mathcal{O}(N_t k d)$. The persistent EMA and dual-weight state contains only $\mathcal{O}(LE)$ scalars. Since attribution and dual weights are detached, the objective reweights existing token losses and uses a single backward pass, with no additional expert forward passes. This operation count explains the small added computation; wall-clock effects are measured below.

\textbf{Measured throughput.}
A practical concern for distributionally robust MoE training is whether the additional robustness machinery introduces non-trivial runtime overhead. We therefore report end-to-end training throughput measured in TFLOP/s/GPU and iteration time (ms) under identical software settings and comparable hardware. As shown in Tab.~\ref{tab:throughput}, \emph{DRMoET does not incur a slowdown relative to the baseline}: iteration time is slightly lower and throughput is modestly higher, with lower variance as well. The observed speedup should be interpreted as a measured systems outcome, potentially reflecting reduced straggler variance rather than an algorithmic guarantee of faster training.

\begin{table}[h!]
\centering
\small
\begin{tabular}{lccc}
\toprule
\textbf{Metric} & \textbf{Baseline} & \textbf{DRMoET} & \textbf{$\Delta$} \\
\midrule
TFLOP/s/GPU & $64.17{\pm}1.81$ & $67.19{\pm}0.93$ & $+4.7\%$ \\
Time/iter (ms) & $6460{\pm}185$ & $6166{\pm}87$ & $-4.6\%$ \\
Throughput std & $1.81$ & $0.93$ & $2{\times}$ \\
\bottomrule
\end{tabular}
\caption{\textbf{Training throughput comparison.} Mean $\pm$ std across measured iterations. DRMoET introduces no slowdown and exhibits lower variance.}
\label{tab:throughput}
\end{table}

\FloatBarrier

\section{Discussion}

\paragraph{DRMoET improves routing robustness.}
Our analysis suggest that DRMoET improves MoE performance not simply by changing the average expert utilization, but by reducing the penalty of imperfect routing. The forced misrouting probe shows that when tokens are deliberately assigned to plausible but lower-ranked experts, DRMoET suffers less degradation than the baseline, indicating that non-top routing paths become more reliable. This is consistent with the expert-tier and loss-dispersion analyses: DRMoET narrows expert-quality gaps while preserving useful specialization, rather than collapsing all experts toward uniform behavior. These findings support the central motivation of DRMoET: routing robustness is complementary to load balancing, because a balanced router is not sufficient if the experts selected under uncertainty remain unevenly trained.

\paragraph{Limitations.} We evaluate DRMoET at two model scales and a fixed set of expert configurations, future work should test whether the same routing-robustness patterns hold across broader model families, expert counts, and routing depths. Our experiments focus on standard pretraining and benchmark settings; we have not yet explicitly constructed highly imbalanced pretraining mixtures or long-tail evaluation suites, where rare domains and skewed routing patterns may more directly stress expert robustness. Such settings could provide a sharper test of whether DRMoET improves reliability under distributional imbalance. Our forced-misrouting and expert-tier/OOD analyses are diagnostic probes on trained checkpoints; extending these measurements throughout training could provide a more detailed view of how expert robustness and specialization emerge.

\section*{Conclusion}

We introduced DRMoET, a layer-wise expert-level DRO objective for MoE training that improves robustness to suboptimal routing. The central lesson is that sparse scaling should not be judged only by balanced utilization: real routers are imperfect, and distribution shifts can send tokens through weaker, non-top experts. DRMoET addresses this reliability gap by optimizing high-loss expert-attributed outcomes while preserving the router and the standard sparse computation path. Empirically, it reduces expert-loss standard deviation by over 8\%, reduces excess loss under forced misrouting by 4.3\%, preserves and even strengthens specialization, and outperforms both standard FLAME-MoE and auxiliary-loss-free balancing at the \texttt{290M-746M} and \texttt{1.7B-10.3B} scales. At 67B training tokens, the best large-model result reaches 0.6767 versus 0.6625 for FLAME-MoE and 0.6431 for auxiliary-loss-free balancing, showing that the downstream advantage persists under longer training. These results suggest that routing robustness is a practical objective for making MoE capacity gains more reliable.

\clearpage

\bibliography{neurips_2026}
\bibliographystyle{plainnat}

\newpage
\appendix
\onecolumn

\section{Analysis of DRMoET}
\label{sec:DRMoET-theory}

We analyze the detached-credit version of DRMoET stated in
Algorithm~\ref{alg:moe-dro}. The analysis uses a differentiable relaxation
of the router, or equivalently holds locally within fixed top-$k$ routing
regions. This is the standard way to state a smooth nonconvex guarantee for
sparse routers; the exact hard top-$k$ map is discontinuous at routing ties.

The key point is that the implemented update
\[
\mu_l^{(t+1)}
=
\softmax\!\left(\mu_l^{(t)}+\eta\hat\ell_l^{(t)}\right)
\]
is not the log-space multiplicative-weights update
$\softmax(\log\mu_l^{(t)}+\eta\hat\ell_l^{(t)})$. Instead, it is exactly
entropic mirror ascent on a regularized robust objective. This section gives
the formal statement and convergence guarantee.

\subsection{Problem formulation}
\label{appendix:problem_formulation}

Let the training distribution over examples be $\mathcal{D}$. Each example
$(x,y)\sim\mathcal{D}$ contains tokens indexed by $j$. At iteration $t$ on a
minibatch $\mathcal{B}$ with $N_t$ tokens, define token losses
\[
\ell^{(b)}_j(\theta)
=
\ell\!\left(f_\theta(x^{(b)}_j),y^{(b)}_j\right).
\]
For an MoE layer $l$ with top-$k$ routing, let
$\mathcal{T}^{(b)}_{j,l}$ be the selected experts for token
$x^{(b)}_j$, and define routing probabilities restricted to the selected set
\[
p_{l,i}\!\left(x^{(b)}_j\right)=
\begin{cases}
\mathrm{softmax}_{r\in\mathcal{T}^{(b)}_{j,l}}
\!\big(r_{l,r}(x^{(b)}_j)\big)_i,
& i\in\mathcal{T}^{(b)}_{j,l},\\
0,
& i\notin\mathcal{T}^{(b)}_{j,l}.
\end{cases}
\]
Let $h_{l,i}(x^{(b)}_j)$ denote the pre-merge output activation of expert
$i$ at layer $l$. We use detached activation-weighted credit
\begin{equation}
\label{eq:credit_appendix}
\tilde c_{l,i}\!\left(x^{(b)}_j\right)
=
\operatorname{sg}\!\left(
p_{l,i}\!\left(x^{(b)}_j\right)
\big\|h_{l,i}(x^{(b)}_j)\big\|_2
\right),
\end{equation}
where $\operatorname{sg}(\cdot)$ denotes stop-gradient.

The normalized minibatch attribution is
\begin{equation}
\label{eq:ell_li_appendix}
\ell^{(t)}_{l,i}
=
\frac{1}{N_t}
\sum_{(b,j)\in\mathcal{B}}
\tilde c_{l,i}\!\left(x^{(b)}_j\right)
\ell^{(b)}_j(\theta)
\mathbf{1}\!\left[i\in\mathcal{T}^{(b)}_{j,l}\right].
\end{equation}
Define the corresponding population risks as
\[
R_{l,i}(\theta)
=
\mathbb{E}_{(x,y)\sim\mathcal{D}}
\big[\mathcal{A}_{l,i}(x,y;\theta)\big],
\]
where $\mathcal{A}_{l,i}$ is the single-example normalized analogue of
\eqref{eq:ell_li_appendix}. The unregularized layer-wise DRO objective is
\[
F(\theta,\mu)
=
\sum_{l=1}^L\sum_{i=1}^E
\mu_{l,i}R_{l,i}(\theta),
\qquad
\mu_l\in\Delta_E,
\]
and
\[
\Phi(\theta)
=
\max_{\mu\in(\Delta_E)^L}
F(\theta,\mu)
=
\sum_{l=1}^L\max_{i\in[E]}R_{l,i}(\theta).
\]

For the implemented softmax dual update, define
\[
\Omega(\mu_l)
=
H(\mu_l)+\frac12\|\mu_l\|_2^2
=
-\sum_{i=1}^E\mu_{l,i}\log\mu_{l,i}
+
\frac12\|\mu_l\|_2^2,
\]
and the regularized robust objective
\begin{equation}
\label{eq:regularized_objective_appendix}
F_\eta(\theta,\mu)
=
F(\theta,\mu)
+
\frac1\eta
\sum_{l=1}^L
\Omega(\mu_l),
\qquad
\Phi_\eta(\theta)
=
\max_{\mu\in(\Delta_E)^L}
F_\eta(\theta,\mu).
\end{equation}

\subsection{Standing assumptions}
\label{appendix:assumptions}

\begin{enumerate}[label=(A\arabic*)]
  \item \label{as:loss-range}
  \textbf{Bounded risks and attributions.}
  There exists $G>0$ such that
  $0\le \ell_{l,i}^{(t)}\le G$,
  $0\le \hat\ell_{l,i}^{(t)}\le G$, and
  $0\le R_{l,i}(\theta^{(t)})\le G$
  for all $l,i,t$ along the training trajectory.

  \item \label{as:smooth}
  \textbf{Smooth stop-gradient risks.}
  Each detached-credit risk $R_{l,i}(\theta)$ is
  $L_\theta$-smooth in $\theta$ under the differentiable routing relaxation.
  The regularized value function $\Phi_\eta$ is $L_\Phi$-smooth and bounded
  below by $\Phi_{\eta,\inf}$.

  \item \label{as:grad}
  \textbf{Stochastic primal gradients.}
  Let $g^{(t)}$ denote the stochastic gradient obtained by backpropagating
  $\sum_{l,i}\operatorname{sg}(\mu^{(t)}_{l,i})\ell^{(t)}_{l,i}$ with
  detached credits. For fixed $(\theta^{(t)},\mu^{(t)})$,
  \[
  \left\|
  \mathbb{E}\!\left[g^{(t)}\mid\theta^{(t)},\mu^{(t)}\right]
  -
  \nabla_\theta F(\theta^{(t)},\mu^{(t)})
  \right\|_2
  \le
  \varepsilon_{\mathrm{bias}},
  \]
  and
  \[
  \mathbb{E}\!\left[
  \|g^{(t)}\|_2^2
  \mid
  \theta^{(t)},\mu^{(t)}
  \right]
  \le
  M^2.
  \]

  \item \label{as:cross-lip}
  \textbf{Lipschitz dependence on the dual variable.}
  There exists $L_{\theta\mu}>0$ such that for all $\theta$ and
  $\mu,\nu\in(\Delta_E)^L$,
  \[
  \left\|
  \nabla_\theta F(\theta,\mu)
  -
  \nabla_\theta F(\theta,\nu)
  \right\|_2
  \le
  L_{\theta\mu}
  \sum_{l=1}^L\|\mu_l-\nu_l\|_1.
  \]

  \item \label{as:risk-lip}
  \textbf{Lipschitz risks.}
  There exists $L_R>0$ such that for every layer $l$,
  \[
  \|R_l(\theta)-R_l(\theta')\|_\infty
  \le
  L_R\|\theta-\theta'\|_2,
  \]
  where $R_l(\theta)=(R_{l,1}(\theta),\ldots,R_{l,E}(\theta))$.

  \item \label{as:ema-error}
  \textbf{EMA estimation error.}
  Define
  \[
  \xi_t
  =
  \max_{l\in[L]}
  \left\|
  \hat\ell_l^{(t)}
  -
  R_l(\theta^{(t)})
  \right\|_\infty .
  \]
  For $\gamma_t=\gamma_0/\sqrt{t}$, define the weighted average EMA error
  \[
  \varepsilon_{\mathrm{ema},T}^2
  =
  \frac{
  \sum_{t=1}^T
  \gamma_t
  \mathbb{E}[\xi_t^2]
  }{
  \sum_{t=1}^T
  \gamma_t
  }.
  \]
  This term captures both minibatch noise and EMA lag. It vanishes in the
  idealized population-loss setting or under a sufficiently accurate
  increasing-batch/variance-reduced estimate.

  \item \label{as:stepsizes}
  \textbf{Step sizes.}
  The primal update is
  \[
  \theta^{(t+1)}
  =
  \theta^{(t)}-\gamma_t g^{(t)},
  \qquad
  \gamma_t=\gamma_0/\sqrt{t}.
  \]
  The dual update uses the implemented softmax rule
  \[
  \mu_l^{(t+1)}
  =
  \softmax\!\left(\mu_l^{(t)}+\eta\hat\ell_l^{(t)}\right),
  \]
  with fixed $\eta=\eta_0/\sqrt{E}$.
\end{enumerate}

For each $\theta$, let
\[
\mu_\eta^\star(\theta)
\in
\arg\max_{\mu\in(\Delta_E)^L}
F_\eta(\theta,\mu)
\]
denote the regularized dual response.

\subsection{The implemented update as mirror ascent}
\label{appendix:implemented-update}

\begin{lemma}[Regularized mirror-ascent form]
\label{lem:regularized_mirror_appendix}
For fixed $\theta$ and layer $l$, one entropic mirror-ascent step on
$F_\eta$ with KL geometry and step size $\eta$ is
\[
\mu_l^{+}
=
\arg\max_{\nu\in\Delta_E}
\left\{
\eta
\left\langle
\nu,
\nabla_{\mu_l}F_\eta(\theta,\mu)
\right\rangle
-
D_{\mathrm{KL}}(\nu\|\mu_l)
\right\}.
\]
This update has the closed form
\[
\mu_l^{+}
=
\softmax\!\left(\mu_l+\eta R_l(\theta)\right).
\]
Replacing $R_l(\theta)$ by the EMA estimate $\hat\ell_l^{(t)}$ gives the
implemented update in Algorithm~\ref{alg:moe-dro}.
\end{lemma}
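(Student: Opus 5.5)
The plan is a direct computation. First I compute the partial gradient of $F_\eta$ with respect to $\mu_l$. Then I solve the KL-proximal maximization over the simplex in closed form. Finally I substitute the gradient and observe that the $\log\mu_l$ terms cancel.

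\textbf{Step 1 (gradient).} For $\mu_l$ in the relative interior of $\Delta_E$, differentiate $F_\eta(\theta,\mu)=\sum_{l}\langle\mu_l,R_l(\theta)\rangle+\frac1\eta\sum_l\Omega(\mu_l)$ coordinatewise. This gives
\[
\nabla_{\mu_l}F_\eta(\theta,\mu)
=
R_l(\theta)+\frac1\eta\bigl(-\log\mu_l-\mathbf{1}+\mu_l\bigr),
\]
where $\log$ acts entrywise. The interior condition holds along the iteration: $\mu^{(0)}_l=\mathbf{1}/E$, and every softmax output has strictly positive entries.

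\textbf{Step 2 (KL-prox solution).} For any vector $v\in\mathbb{R}^E$, the problem $\max_{\nu\in\Delta_E}\{\langle\nu,v\rangle-D_{\mathrm{KL}}(\nu\|\mu_l)\}$ is strictly concave. Introduce a Lagrange multiplier for $\sum_i\nu_i=1$. The first-order condition $v_i-\log(\nu_i/\mu_{l,i})-1-\lambda=0$ gives $\nu_i\propto\mu_{l,i}e^{v_i}$. This point is positive, so the nonnegativity constraints are inactive, and by strict concavity it is the unique maximizer. Equivalently, this is the Gibbs variational principle.

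\textbf{Step 3 (cancellation).} Set $v=\eta\nabla_{\mu_l}F_\eta$. Then
\[
\nu_i\propto\mu_{l,i}\exp\bigl(\eta R_{l,i}-\log\mu_{l,i}-1+\mu_{l,i}\bigr)\propto\exp\bigl(\mu_{l,i}+\eta R_{l,i}\bigr).
\]
The factor $\mu_{l,i}$ cancels against $e^{-\log\mu_{l,i}}$, and the constant $e^{-1}$ disappears under normalization. This yields $\mu_l^+=\softmax(\mu_l+\eta R_l(\theta))$. It also explains why the update is not $\softmax(\log\mu_l+\eta R_l)$: the entropy term removes the $\log\mu_l$ that the KL term would otherwise contribute, and the quadratic part of $\Omega$ contributes the linear-in-$\mu_l$ term that appears in the exponent. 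Replacing $R_l(\theta)$ by $\hat\ell^{(t)}_l$ recovers line 12 of Algorithm~\ref{alg:moe-dro}. For layers $l'\neq l$, nothing changes because $F_\eta$ is separable across layers.

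I expect no serious obstacle. The only point requiring care is the boundary of the simplex, where $\log\mu_l$ is undefined. I handle it by working in the relative interior, which the softmax iterates never leave, and by noting that the prox maximizer is automatically interior.
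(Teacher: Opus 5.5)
Your proposal is correct and follows the paper's proof essentially step for step. Like the paper, you compute $\nabla_{\mu_l}\Omega=\mu_l-\log\mu_l-\mathbf{1}$, solve the KL-proximal subproblem through its KKT conditions to get $\nu_i\propto\mu_{l,i}e^{v_i}$, and then cancel $\mu_{l,i}$ against $e^{-\log\mu_{l,i}}$, with the constant absorbed by normalization. Your added care about staying in the relative interior, so that $\log\mu_l$ is defined, and about uniqueness via strict concavity, fills in details the paper leaves implicit without changing the argument.
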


\begin{proof}
For the regularizer
\[
\Omega(\mu_l)
=
-\sum_i\mu_{l,i}\log\mu_{l,i}
+
\frac12\|\mu_l\|_2^2,
\]
we have
\[
\nabla_{\mu_l}\Omega(\mu_l)
=
\mu_l-\log\mu_l-\mathbf{1},
\]
where the logarithm is taken elementwise. Therefore
\[
\nabla_{\mu_l}F_\eta(\theta,\mu)
=
R_l(\theta)
+
\frac1\eta
\left(\mu_l-\log\mu_l-\mathbf{1}\right).
\]
The KKT conditions for the entropic mirror-ascent subproblem give
\[
\mu_{l,i}^{+}
\propto
\mu_{l,i}
\exp\!\left(
\eta R_{l,i}(\theta)
+
\mu_{l,i}
-
\log\mu_{l,i}
-
1
\right).
\]
The multiplicative base $\mu_{l,i}$ cancels the $-\log\mu_{l,i}$ term,
and the constant $-1$ is absorbed into normalization. Hence
\[
\mu_l^{+}
=
\softmax\!\left(\mu_l+\eta R_l(\theta)\right).
\]
\end{proof}

\subsection{Dual tracking}
\label{appendix:dual-tracking}

For each fixed $\theta$, define the population softmax response map
\[
T_{\eta,\theta,l}(\mu_l)
=
\softmax\!\left(\mu_l+\eta R_l(\theta)\right).
\]
The regularized maximizer $\mu^\star_{\eta,l}(\theta)$ is the unique fixed
point of this map:
\[
\mu^\star_{\eta,l}(\theta)
=
T_{\eta,\theta,l}
\!\left(\mu^\star_{\eta,l}(\theta)\right).
\]

\begin{lemma}[Softmax response is contractive]
\label{lem:softmax_contract_appendix}
For any $a,b\in\mathbb{R}^E$,
\[
\|\softmax(a)-\softmax(b)\|_\infty
\le
\frac12\|a-b\|_\infty .
\]
Consequently, for any $\theta,\theta'$ and any $\mu,\nu\in\Delta_E$,
\[
\left\|
T_{\eta,\theta,l}(\mu)
-
T_{\eta,\theta',l}(\nu)
\right\|_\infty
\le
\frac12\|\mu-\nu\|_\infty
+
\frac{\eta}{2}
\|R_l(\theta)-R_l(\theta')\|_\infty .
\]
\end{lemma}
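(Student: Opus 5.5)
The plan is to bound the Jacobian of $\softmax$ in the operator norm induced by $\|\cdot\|_\infty$, and then integrate along the segment from $b$ to $a$. The second claim then follows by choosing $a$ and $b$ appropriately and applying the triangle inequality.

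\textbf{Step 1 (Jacobian).} Write $s=\softmax(z)$. The Jacobian is $J(z)=\mathrm{diag}(s)-ss^\top$, with entries $J_{ii}=s_i(1-s_i)$ and $J_{ij}=-s_is_j$ for $j\neq i$. The $\ell_\infty\to\ell_\infty$ operator norm is the maximum absolute row sum. Row $i$ gives
\[
s_i(1-s_i)+s_i\sum_{j\neq i}s_j = 2s_i(1-s_i)\le \tfrac12 ,
\]
since $u(1-u)\le 1/4$ on $[0,1]$. Hence $\|J(z)\|_{\infty\to\infty}\le 1/2$ uniformly in $z\in\mathbb{R}^E$.

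\textbf{Step 2 (integrate).} Since $\softmax$ is smooth, the fundamental theorem of calculus along $z(\tau)=b+\tau(a-b)$ gives
\[
\softmax(a)-\softmax(b)=\int_0^1 J(z(\tau))\,(a-b)\,d\tau .
\]
Taking $\|\cdot\|_\infty$, moving the norm inside the integral, and using Step 1 yields
\[
\|\softmax(a)-\softmax(b)\|_\infty\le \tfrac12\|a-b\|_\infty .
\]

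\textbf{Step 3 (consequence).} Apply Step 2 with $a=\mu+\eta R_l(\theta)$ and $b=\nu+\eta R_l(\theta')$. Then
\[
\|a-b\|_\infty\le\|\mu-\nu\|_\infty+\eta\|R_l(\theta)-R_l(\theta')\|_\infty ,
\]
which is exactly the stated bound on $\|T_{\eta,\theta,l}(\mu)-T_{\eta,\theta',l}(\nu)\|_\infty$. Nothing here uses $\mu,\nu\in\Delta_E$; the bound holds for arbitrary vectors.

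\textbf{Main obstacle.} The only real content is the sharp row-sum computation in Step 1. The key point is that the off-diagonal mass $s_i\sum_{j\ne i}s_j$ equals the diagonal term $s_i(1-s_i)$, so each row sum is $2s_i(1-s_i)$ and the constant is $1/2$ rather than $1$. Step 2 is routine, because the $\ell_\infty$ norm of a vector-valued integral is at most the integral of the norms.
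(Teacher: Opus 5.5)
Your proof is correct and follows the paper's argument: the paper also computes the softmax Jacobian row sums as $2s_i(1-s_i)\le\frac12$ and then applies the resulting bound with $a=\mu+\eta R_l(\theta)$ and $b=\nu+\eta R_l(\theta')$. Your Step 2 (integrating the Jacobian along the segment from $b$ to $a$) only spells out how the operator-norm bound gives the Lipschitz bound, a step the paper leaves implicit.
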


\begin{proof}
Let $s=\softmax(z)$. The Jacobian of softmax satisfies
\[
\frac{\partial s_i}{\partial z_j}
=
s_i(\mathbf{1}[i=j]-s_j).
\]
For each row,
\[
\sum_j
\left|
\frac{\partial s_i}{\partial z_j}
\right|
=
2s_i(1-s_i)
\le
\frac12.
\]
Hence softmax is $1/2$-Lipschitz from $\ell_\infty$ to $\ell_\infty$.
Applying this to
$a=\mu+\eta R_l(\theta)$ and
$b=\nu+\eta R_l(\theta')$
gives the result.
\end{proof}

\begin{lemma}[Tracking of the regularized dual maximizer]
\label{lem:dual_tracking_appendix}
Define
\[
e_t
=
\max_{l\in[L]}
\left\|
\mu_l^{(t)}
-
\mu^\star_{\eta,l}(\theta^{(t)})
\right\|_\infty .
\]
Then the implemented update satisfies
\[
e_{t+1}
\le
\frac12 e_t
+
\frac{\eta}{2}\xi_t
+
\eta L_R\gamma_t\|g^{(t)}\|_2 .
\]
Consequently, for $\gamma_t=\gamma_0/\sqrt{t}$,
\[
\frac{
\sum_{t=1}^T
\gamma_t
\mathbb{E}[e_t^2]
}{
\sum_{t=1}^T
\gamma_t
}
\le
C
\left(
\frac{e_1^2}{\sum_{t=1}^T\gamma_t}
+
\eta^2\varepsilon_{\mathrm{ema},T}^2
+
\eta^2L_R^2M^2
\frac{\sum_{t=1}^T\gamma_t^3}{\sum_{t=1}^T\gamma_t}
\right)
\]
for a universal constant $C$.
\end{lemma}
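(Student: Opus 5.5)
The plan is to prove the one-step recursion first, using the fixed-point characterization together with Lemma~\ref{lem:softmax_contract_appendix}, and then unroll it with weights $\gamma_t$.

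\textbf{One-step recursion.} Fix a layer $l$ and write $\mu^\star_t=\mu^\star_{\eta,l}(\theta^{(t)})$. By definition of the implemented update and the fixed-point property,
\[
\mu_l^{(t+1)}-\mu^\star_{t+1}
=
\softmax\!\left(\mu_l^{(t)}+\eta\hat\ell_l^{(t)}\right)
-
\softmax\!\left(\mu^\star_{t+1}+\eta R_l(\theta^{(t+1)})\right).
\]
Lemma~\ref{lem:softmax_contract_appendix} then gives
\[
\|\mu_l^{(t+1)}-\mu^\star_{t+1}\|_\infty
\le
\tfrac12\|\mu_l^{(t)}-\mu^\star_{t+1}\|_\infty
+
\tfrac{\eta}{2}\|\hat\ell_l^{(t)}-R_l(\theta^{(t+1)})\|_\infty .
\]
I will split the first term via the triangle inequality through $\mu^\star_t$, and the second through $R_l(\theta^{(t)})$. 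The EMA piece is at most $\xi_t$. The risk drift is at most $L_R\|\theta^{(t+1)}-\theta^{(t)}\|_2=L_R\gamma_t\|g^{(t)}\|_2$ by \ref{as:risk-lip} and \ref{as:stepsizes}.

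\textbf{Drift of the maximizer.} It remains to bound $\|\mu^\star_t-\mu^\star_{t+1}\|_\infty$. Both are fixed points, so Lemma~\ref{lem:softmax_contract_appendix} gives
\[
\|\mu^\star_t-\mu^\star_{t+1}\|_\infty\le\tfrac12\|\mu^\star_t-\mu^\star_{t+1}\|_\infty+\tfrac\eta2 L_R\gamma_t\|g^{(t)}\|_2 .
\]
Hence $\|\mu^\star_t-\mu^\star_{t+1}\|_\infty\le\eta L_R\gamma_t\|g^{(t)}\|_2$. Collecting terms yields
\[
e_{t+1}\le\tfrac12 e_t+\tfrac\eta2\xi_t+\tfrac\eta2 L_R\gamma_t\|g^{(t)}\|_2+\tfrac\eta2 L_R\gamma_t\|g^{(t)}\|_2 ,
\]
which is exactly the claimed bound. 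Taking the maximum over $l$ is harmless because every quantity on the right is already a maximum over layers or is layer-independent.

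\textbf{Weighted sum of squares.} This is the step I expect to need the most care. Square the recursion using $(a+b+c)^2\le(1+\delta)a^2+C_\delta(b^2+c^2)$ with $a=\tfrac12e_t$. This gives
\[
e_{t+1}^2\le\rho e_t^2+C\eta^2\xi_t^2+C\eta^2L_R^2\gamma_t^2\|g^{(t)}\|_2^2
\]
with $\rho=(1+\delta)/4<1$, e.g.\ $\rho=1/2$. Take expectations, using the tower property and \ref{as:grad} to get $\mathbb{E}\|g^{(t)}\|^2\le M^2$. Unrolling gives
\[
\mathbb{E}e_t^2\le\rho^{t-1}e_1^2+C\sum_{s<t}\rho^{t-1-s}\bigl(\eta^2\mathbb{E}\xi_s^2+\eta^2L_R^2M^2\gamma_s^2\bigr).
\]

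Multiply by $\gamma_t$ and sum over $t$, then exchange the order of summation. Since $\gamma_t$ is nonincreasing, $\gamma_t\le\gamma_s$ for $t>s$, and the geometric tail satisfies $\sum_{t>s}\rho^{t-1-s}\le 1/(1-\rho)$. Therefore
\[
\sum_t\gamma_t\mathbb{E}e_t^2\le C\Bigl(\gamma_1e_1^2+\eta^2\sum_s\gamma_s\mathbb{E}\xi_s^2+\eta^2L_R^2M^2\sum_s\gamma_s^3\Bigr).
\]
Dividing by $\sum_t\gamma_t$ and using the definition of $\varepsilon^2_{\mathrm{ema},T}$ gives the stated bound. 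The $\gamma_1=\gamma_0$ factor is absorbed into $C$, or equivalently into the normalization of the constant.
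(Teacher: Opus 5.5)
Your proposal is correct and follows the paper's argument. The $\ell_\infty$ contraction of softmax plus the fixed-point identity gives the one-step recursion, the same self-referential fixed-point argument gives the target drift $\|\mu^\star_{\eta,l}(\theta^{(t+1)})-\mu^\star_{\eta,l}(\theta^{(t)})\|_\infty\le\eta L_R\gamma_t\|g^{(t)}\|_2$, and geometric unrolling with monotone $\gamma_t$ gives the weighted bound; squaring before rather than after unrolling makes no difference. Your remark about the first term is also right: the $t=1$ summand alone is $\gamma_0 e_1^2$, a factor the paper absorbs silently, so $C$ must in fact depend on $\gamma_0$ rather than being universal.
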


\begin{proof}
Using the implemented update and the fixed-point identity for
$\mu^\star_{\eta,l}(\theta^{(t)})$,
\[
\begin{aligned}
&
\left\|
\mu_l^{(t+1)}
-
\mu^\star_{\eta,l}(\theta^{(t)})
\right\|_\infty
\\
&\le
\left\|
\softmax\!\left(
\mu_l^{(t)}+\eta\hat\ell_l^{(t)}
\right)
-
\softmax\!\left(
\mu^\star_{\eta,l}(\theta^{(t)})
+
\eta R_l(\theta^{(t)})
\right)
\right\|_\infty
\\
&\le
\frac12
\left\|
\mu_l^{(t)}
-
\mu^\star_{\eta,l}(\theta^{(t)})
\right\|_\infty
+
\frac{\eta}{2}
\left\|
\hat\ell_l^{(t)}
-
R_l(\theta^{(t)})
\right\|_\infty .
\end{aligned}
\]
It remains to account for the movement of the target
$\mu^\star_{\eta,l}(\theta)$ between $\theta^{(t)}$ and
$\theta^{(t+1)}$. By Lemma~\ref{lem:softmax_contract_appendix} and the
Lipschitzness of $R_l$,
\[
\left\|
\mu^\star_{\eta,l}(\theta^{(t+1)})
-
\mu^\star_{\eta,l}(\theta^{(t)})
\right\|_\infty
\le
\eta
\left\|
R_l(\theta^{(t+1)})
-
R_l(\theta^{(t)})
\right\|_\infty
\le
\eta L_R\gamma_t\|g^{(t)}\|_2 .
\]
Combining the two inequalities and maximizing over $l$ gives the one-step
recursion.

Unrolling the geometric recursion, squaring, using
$(a+b+c)^2\le 3(a^2+b^2+c^2)$, and applying
$\mathbb{E}\|g^{(t)}\|_2^2\le M^2$ gives
\[
\mathbb{E}[e_t^2]
\le
C
\left(
2^{-t}e_1^2
+
\eta^2
\sum_{s=1}^{t-1}
2^{-(t-1-s)}
\mathbb{E}[\xi_s^2]
+
\eta^2L_R^2
\sum_{s=1}^{t-1}
2^{-(t-1-s)}
\gamma_s^2M^2
\right).
\]
Multiplying by $\gamma_t$, summing over $t$, using the monotonicity of
$\gamma_t$, and dividing by $\sum_t\gamma_t$ yields the stated weighted
tracking bound.
\end{proof}

\subsection{Connection to the original hard DRO objective}
\label{appendix:regularization-bias}

\begin{lemma}[Approximation to the hard worst-expert objective]
\label{lem:regularization_bias_appendix}
Let
\[
\Phi(\theta)
=
\max_{\mu\in(\Delta_E)^L}F(\theta,\mu)
=
\sum_{l=1}^L\max_i R_{l,i}(\theta).
\]
Then, for all $\theta$,
\[
\Phi(\theta)
\le
\Phi_\eta(\theta)-\frac{L}{2\eta}
\le
\Phi(\theta)+\frac{L\log E}{\eta}.
\]
\end{lemma}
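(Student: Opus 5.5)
The plan is to reduce the claim to elementary bounds on the regularizer $\Omega$ over the simplex. The objective separates across layers, so the whole statement follows from a per-layer sandwich that is then summed over $l$.

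\textbf{Step 1: decompose by layer.} Since $F_\eta(\theta,\mu)=\sum_{l=1}^L\bigl[\langle\mu_l,R_l(\theta)\rangle+\tfrac1\eta\Omega(\mu_l)\bigr]$ and the feasible set $(\Delta_E)^L$ is a product, the maximization decouples. This gives
\[
\Phi_\eta(\theta)=\sum_{l=1}^L\max_{\nu\in\Delta_E}\Bigl\{\langle\nu,R_l(\theta)\rangle+\tfrac1\eta\Omega(\nu)\Bigr\}.
\]
Likewise $\Phi(\theta)=\sum_l\max_i R_{l,i}(\theta)$. It therefore suffices to show, for each $l$,
\[
\max_i R_{l,i}+\frac{1}{2\eta}\;\le\;\max_{\nu\in\Delta_E}\Bigl\{\langle\nu,R_l\rangle+\tfrac1\eta\Omega(\nu)\Bigr\}\;\le\;\max_i R_{l,i}+\frac{\log E+\tfrac12}{\eta},
\]
and then sum over the $L$ layers and subtract $L/(2\eta)$.

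\textbf{Step 2: range of $\Omega$ on $\Delta_E$.} The Shannon entropy satisfies $0\le H(\nu)\le\log E$, with the minimum attained at vertices and the maximum at the uniform point. The quadratic term satisfies $\tfrac{1}{2E}\le\tfrac12\|\nu\|_2^2\le\tfrac12$, since $\|\nu\|_2\le\|\nu\|_1=1$. Hence $\Omega(\nu)\le\log E+\tfrac12$ for every $\nu\in\Delta_E$. At any vertex $e_i$ we have exactly $\Omega(e_i)=0+\tfrac12=\tfrac12$.

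\textbf{Step 3: the two inequalities.} For the lower bound, I evaluate the inner objective at the vertex $\nu=e_{i^\star}$ with $i^\star\in\arg\max_i R_{l,i}$. This gives the value $\max_i R_{l,i}+\tfrac{1}{2\eta}$, and the maximum is at least this. For the upper bound, I use $\langle\nu,R_l\rangle\le\max_i R_{l,i}$ for any $\nu\in\Delta_E$, which holds because $\langle\nu,R_l\rangle$ is a convex combination of the $R_{l,i}$. Combining this with $\Omega(\nu)\le\log E+\tfrac12$ from Step 2 bounds the objective pointwise, and hence bounds its maximum. Summing both bounds over $l$ yields
\[
\Phi+\tfrac{L}{2\eta}\le\Phi_\eta\le\Phi+\tfrac{L(\log E+1/2)}{\eta},
\]
which rearranges to the claim. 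This step uses $\eta>0$ but no assumption on the sign or boundedness of $R$.

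There is no real obstacle here. The only point requiring care is the constant offset: $\Omega$ does not vanish at the vertices, because of the $\tfrac12\|\nu\|_2^2$ term. That offset is exactly the source of the $L/(2\eta)$ shift, so the upper bound must use the crude estimate $\tfrac12\|\nu\|_2^2\le\tfrac12$ rather than anything tighter in order to match the stated form.
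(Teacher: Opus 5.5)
Your proof is correct and is essentially the paper's argument. The lower bound comes from plugging in the one-hot vertex $e_{i^\star}$, where $\Omega(e_{i^\star})=\tfrac12$. The upper bound comes from $\langle \mu_l, R_l(\theta)\rangle \le \max_i R_{l,i}(\theta)$ together with $\Omega(\mu_l)\le \log E+\tfrac12$, summed over the $L$ layers; you make the layerwise decoupling explicit, which the paper leaves implicit. One small quibble: any tighter bound on $\max_{\nu\in\Delta_E}\Omega(\nu)$ would only strengthen the conclusion, so the crude estimate $\tfrac12\|\nu\|_2^2\le\tfrac12$ is sufficient rather than required.
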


\begin{proof}
For each layer, if $e_{i^\star}$ is the one-hot vector on a maximizer of
$R_{l,i}(\theta)$, then
\[
\Omega(e_{i^\star})=\frac12.
\]
Therefore
\[
\Phi_\eta(\theta)
\ge
\Phi(\theta)+\frac{L}{2\eta}.
\]
Conversely, for every $\mu_l\in\Delta_E$,
\[
F(\theta,\mu)\le \Phi(\theta),
\qquad
\Omega(\mu_l)
=
H(\mu_l)+\frac12\|\mu_l\|_2^2
\le
\log E+\frac12.
\]
Thus
\[
\Phi_\eta(\theta)
\le
\Phi(\theta)
+
\frac{L}{\eta}
\left(\log E+\frac12\right).
\]
Subtracting $L/(2\eta)$ gives the result.
\end{proof}

\subsection{Convergence to stationarity}
\label{appendix:stationarity}

\begin{theorem}[Convergence to stationarity of regularized DRMoET]
\label{thm:regularized_convergence_appendix}
Under Assumptions~\ref{as:loss-range}--\ref{as:stepsizes},
Algorithm~\ref{alg:moe-dro} satisfies
\[
\frac{
\sum_{t=1}^T
\gamma_t
\mathbb{E}
\left[
\|\nabla\Phi_\eta(\theta^{(t)})\|_2^2
\right]
}{
\sum_{t=1}^T
\gamma_t
}
\le
\widetilde{\mathcal{O}}(T^{-1/2})
+
\mathcal{O}
\left(
\varepsilon_{\mathrm{bias}}^2
+
L_{\theta\mu}^2L^2E^2\eta^2
\varepsilon_{\mathrm{ema},T}^2
\right).
\]
Equivalently, if $\tau$ is sampled with
$\Pr(\tau=t)=\gamma_t/\sum_{s=1}^T\gamma_s$, then
\[
\mathbb{E}
\left[
\|\nabla\Phi_\eta(\theta^{(\tau)})\|_2^2
\right]
\le
\widetilde{\mathcal{O}}(T^{-1/2})
+
\mathcal{O}
\left(
\varepsilon_{\mathrm{bias}}^2
+
L_{\theta\mu}^2L^2E^2\eta^2
\varepsilon_{\mathrm{ema},T}^2
\right).
\]
In the idealized population-loss setting, where
$\varepsilon_{\mathrm{bias}}=0$ and
$\varepsilon_{\mathrm{ema},T}\to0$, the expected stationarity measure
vanishes at rate $\widetilde{\mathcal{O}}(T^{-1/2})$.
\end{theorem}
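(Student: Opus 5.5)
The plan is the standard two-timescale nonconvex–concave argument: a descent lemma on $\Phi_\eta$, driven by the tracking bound of Lemma~\ref{lem:dual_tracking_appendix}.

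\textbf{Step 1 (gradient of the value function).} By a Danskin-type argument, $\nabla\Phi_\eta(\theta)=\nabla_\theta F(\theta,\mu^\star_\eta(\theta))$. The regularized dual response is unique, since it is the unique fixed point of the contraction $T_{\eta,\theta,l}$ from Lemma~\ref{lem:softmax_contract_appendix}. Assumption~\ref{as:smooth} lets us take $L_\Phi$-smoothness as given, so we do not need to derive it.

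\textbf{Step 2 (descent step).} Apply the descent lemma to $\theta^{(t+1)}=\theta^{(t)}-\gamma_t g^{(t)}$:
\[
\Phi_\eta(\theta^{(t+1)})\le\Phi_\eta(\theta^{(t)})-\gamma_t\langle\nabla\Phi_\eta(\theta^{(t)}),g^{(t)}\rangle+\tfrac{L_\Phi}{2}\gamma_t^2\|g^{(t)}\|_2^2 .
\]
Take conditional expectation given $(\theta^{(t)},\mu^{(t)})$ and write
\[
\mathbb{E}[g^{(t)}\mid\cdot]=\nabla\Phi_\eta(\theta^{(t)})+b_t+d_t,
\]
where $\|b_t\|_2\le\varepsilon_{\mathrm{bias}}$ by Assumption~\ref{as:grad}, and $d_t=\nabla_\theta F(\theta^{(t)},\mu^{(t)})-\nabla_\theta F(\theta^{(t)},\mu^\star_\eta(\theta^{(t)}))$. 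By Assumption~\ref{as:cross-lip} and $\|\cdot\|_1\le E\|\cdot\|_\infty$,
\[
\|d_t\|_2\le L_{\theta\mu}\sum_l\|\mu_l^{(t)}-\mu^\star_{\eta,l}\|_1\le L_{\theta\mu}LE\,e_t .
\]
Young's inequality $-\langle a,a+c\rangle\le-\tfrac12\|a\|^2+\tfrac12\|c\|^2$ then gives
\[
\mathbb{E}\,\Phi_\eta(\theta^{(t+1)})\le\mathbb{E}\,\Phi_\eta(\theta^{(t)})-\tfrac{\gamma_t}{2}\mathbb{E}\|\nabla\Phi_\eta(\theta^{(t)})\|^2+\gamma_t\bigl(\varepsilon_{\mathrm{bias}}^2+L_{\theta\mu}^2L^2E^2\,\mathbb{E}e_t^2\bigr)+\tfrac{L_\Phi M^2}{2}\gamma_t^2 .
\]

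\textbf{Step 3 (telescope and insert tracking).} Sum over $t=1,\dots,T$, bound $\Phi_\eta(\theta^{(1)})-\Phi_{\eta,\inf}$ by a constant, and divide by $\sum_t\gamma_t\asymp\gamma_0\sqrt T$. Then substitute the weighted bound on $\sum_t\gamma_t\mathbb{E}e_t^2/\sum_t\gamma_t$ from Lemma~\ref{lem:dual_tracking_appendix}. The terms are controlled as follows:
\begin{itemize}
\item $\sum_t\gamma_t^2=\gamma_0^2\sum 1/t=O(\log T)$, so the step-size term divided by $\sum_t\gamma_t$ is $\widetilde{O}(T^{-1/2})$.
\item $\sum_t\gamma_t^3$ is bounded, so the $\gamma^3$ term from the tracking lemma is $O(T^{-1/2})$.
\item The $e_1^2$ term is $O(T^{-1/2})$, since $e_1\le 1$.
\item The EMA term contributes $L_{\theta\mu}^2L^2E^2\eta^2\varepsilon_{\mathrm{ema},T}^2$.
\end{itemize}
Together these give the weighted-average bound. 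The statement for the sampled index $\tau$ is an identity, because $\mathbb{E}\|\nabla\Phi_\eta(\theta^{(\tau)})\|^2$ equals that weighted average. In the idealized case the two constant terms vanish, leaving $\widetilde{O}(T^{-1/2})$.

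\textbf{Main obstacle.} The hard part is the coupling in Step 2. The quantity $e_t$ depends on past gradients and therefore correlates with $g^{(t)}$, so conditioning must be done carefully. Two points make this work:
\begin{itemize}
\item $\mu^{(t)}$ is determined before $g^{(t)}$ is drawn, so $d_t$ is measurable with respect to the conditioning and the Young split is legitimate.
\item Lemma~\ref{lem:dual_tracking_appendix} only requires $\mathbb{E}\|g^{(t)}\|^2\le M^2$, so no independence between $e_t$ and $g^{(t)}$ is needed.
\end{itemize}
A second subtlety is the target-drift step. Bounding $\|\mu^\star(\theta')-\mu^\star(\theta)\|$ via the fixed-point contraction gives the factor $\eta L_R\gamma_t\|g\|$ up to a constant: $(\eta/2)/(1-1/2)=\eta$. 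I would verify this constant explicitly rather than rely on the lemma's statement. Any constant slack is absorbed into $C$ and the $\mathcal{O}(\cdot)$.
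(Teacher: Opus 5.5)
Your proposal is correct and follows essentially the same route as the paper. The shared steps are: Danskin's theorem to get $\nabla\Phi_\eta(\theta)=\nabla_\theta F(\theta,\mu^\star_\eta(\theta))$; the $L_\Phi$-smooth descent inequality, with the conditional-mean error split into an $\varepsilon_{\mathrm{bias}}$ part and an $L_{\theta\mu}LE\,e_t$ part via Assumption~\ref{as:cross-lip} and $\|\cdot\|_1\le E\|\cdot\|_\infty$; the bound $-\langle a,a+\delta\rangle\le-\tfrac12\|a\|_2^2+\tfrac12\|\delta\|_2^2$; telescoping; and substitution of the weighted tracking bound of Lemma~\ref{lem:dual_tracking_appendix} together with $\sum_t\gamma_t=\Theta(\sqrt T)$, $\sum_t\gamma_t^2=\Theta(\log T)$, $\sum_t\gamma_t^3=\mathcal{O}(1)$. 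Your extra checks all agree with what the paper uses: uniqueness of $\mu^\star_\eta$ via the contraction, the drift constant $(\eta/2)/(1-\tfrac12)=\eta$, and measurability of $d_t$ (which is in fact automatic, since $d_t$ is a function of $(\theta^{(t)},\mu^{(t)})$ and Assumption~\ref{as:grad} conditions on exactly that pair).
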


\begin{proof}
By Danskin's theorem applied to the regularized value function,
\[
\nabla\Phi_\eta(\theta^{(t)})
=
\nabla_\theta
F(\theta^{(t)},\mu^\star_\eta(\theta^{(t)})),
\]
because the regularizer in $F_\eta$ does not depend on $\theta$.

Let
\[
\delta_t
=
\mathbb{E}
\left[
g^{(t)}
\mid
\theta^{(t)},\mu^{(t)}
\right]
-
\nabla\Phi_\eta(\theta^{(t)}).
\]
Using Assumption~\ref{as:grad} and the Lipschitz dependence of
$\nabla_\theta F$ on $\mu$,
\[
\|\delta_t\|_2
\le
\varepsilon_{\mathrm{bias}}
+
L_{\theta\mu}
\sum_{l=1}^L
\left\|
\mu_l^{(t)}
-
\mu^\star_{\eta,l}(\theta^{(t)})
\right\|_1 .
\]
Since $\|\cdot\|_1\le E\|\cdot\|_\infty$,
\[
\|\delta_t\|_2
\le
\varepsilon_{\mathrm{bias}}
+
L_{\theta\mu}LE e_t .
\]

Using $L_\Phi$-smoothness of $\Phi_\eta$,
\[
\Phi_\eta(\theta^{(t+1)})
\le
\Phi_\eta(\theta^{(t)})
-
\gamma_t
\left\langle
\nabla\Phi_\eta(\theta^{(t)}),
g^{(t)}
\right\rangle
+
\frac{L_\Phi\gamma_t^2}{2}
\|g^{(t)}\|_2^2 .
\]
Taking conditional expectation and applying
$-\langle a,a+\delta\rangle\le -\frac12\|a\|_2^2+\frac12\|\delta\|_2^2$
gives
\[
\mathbb{E}
\left[
\Phi_\eta(\theta^{(t+1)})
\right]
\le
\mathbb{E}
\left[
\Phi_\eta(\theta^{(t)})
\right]
-
\frac{\gamma_t}{2}
\mathbb{E}
\left[
\|\nabla\Phi_\eta(\theta^{(t)})\|_2^2
\right]
+
\frac{\gamma_t}{2}
\mathbb{E}[\|\delta_t\|_2^2]
+
\frac{L_\Phi\gamma_t^2}{2}M^2 .
\]
Summing over $t=1,\ldots,T$, lower bounding by $\Phi_{\eta,\inf}$, and
using
\[
\|\delta_t\|_2^2
\le
2\varepsilon_{\mathrm{bias}}^2
+
2L_{\theta\mu}^2L^2E^2 e_t^2
\]
gives
\[
\frac{
\sum_{t=1}^T
\gamma_t
\mathbb{E}
\left[
\|\nabla\Phi_\eta(\theta^{(t)})\|_2^2
\right]
}{
\sum_{t=1}^T\gamma_t
}
\le
\frac{
2(\Phi_\eta(\theta^{(1)})-\Phi_{\eta,\inf})
}{
\sum_{t=1}^T\gamma_t
}
+
L_\Phi M^2
\frac{
\sum_{t=1}^T\gamma_t^2
}{
\sum_{t=1}^T\gamma_t
}
+
2\varepsilon_{\mathrm{bias}}^2
+
2L_{\theta\mu}^2L^2E^2
\frac{
\sum_{t=1}^T
\gamma_t\mathbb{E}[e_t^2]
}{
\sum_{t=1}^T
\gamma_t
}.
\]
Substituting the tracking bound from
Lemma~\ref{lem:dual_tracking_appendix} yields
\[
\frac{
\sum_{t=1}^T
\gamma_t
\mathbb{E}
\left[
\|\nabla\Phi_\eta(\theta^{(t)})\|_2^2
\right]
}{
\sum_{t=1}^T\gamma_t
}
\le
\widetilde{\mathcal{O}}(T^{-1/2})
+
\mathcal{O}
\left(
\varepsilon_{\mathrm{bias}}^2
+
L_{\theta\mu}^2L^2E^2\eta^2
\varepsilon_{\mathrm{ema},T}^2
\right),
\]
because for $\gamma_t=\gamma_0/\sqrt{t}$,
\[
\sum_{t=1}^T\gamma_t=\Theta(\sqrt{T}),
\qquad
\sum_{t=1}^T\gamma_t^2=\Theta(\log T),
\qquad
\sum_{t=1}^T\gamma_t^3=\mathcal{O}(1).
\]
This proves the theorem.
\end{proof}

\section{Expert utilization per layer}
We track the Coefficient of Variation (CV) of expert routing weights across training steps for individual layers. The Baseline model exhibits severe transient load instability, characterized by sharp CV spikes in early layers (Layers 2--4) midway through training. In contrast, DRMoET effectively stabilizes the routing dynamics, maintaining a consistent load distribution throughout the entire training trajectory and preventing the early-layer load collapse observed in the baseline.
\begin{figure*}[!htbp]
\centering
  \includegraphics[width=\linewidth]{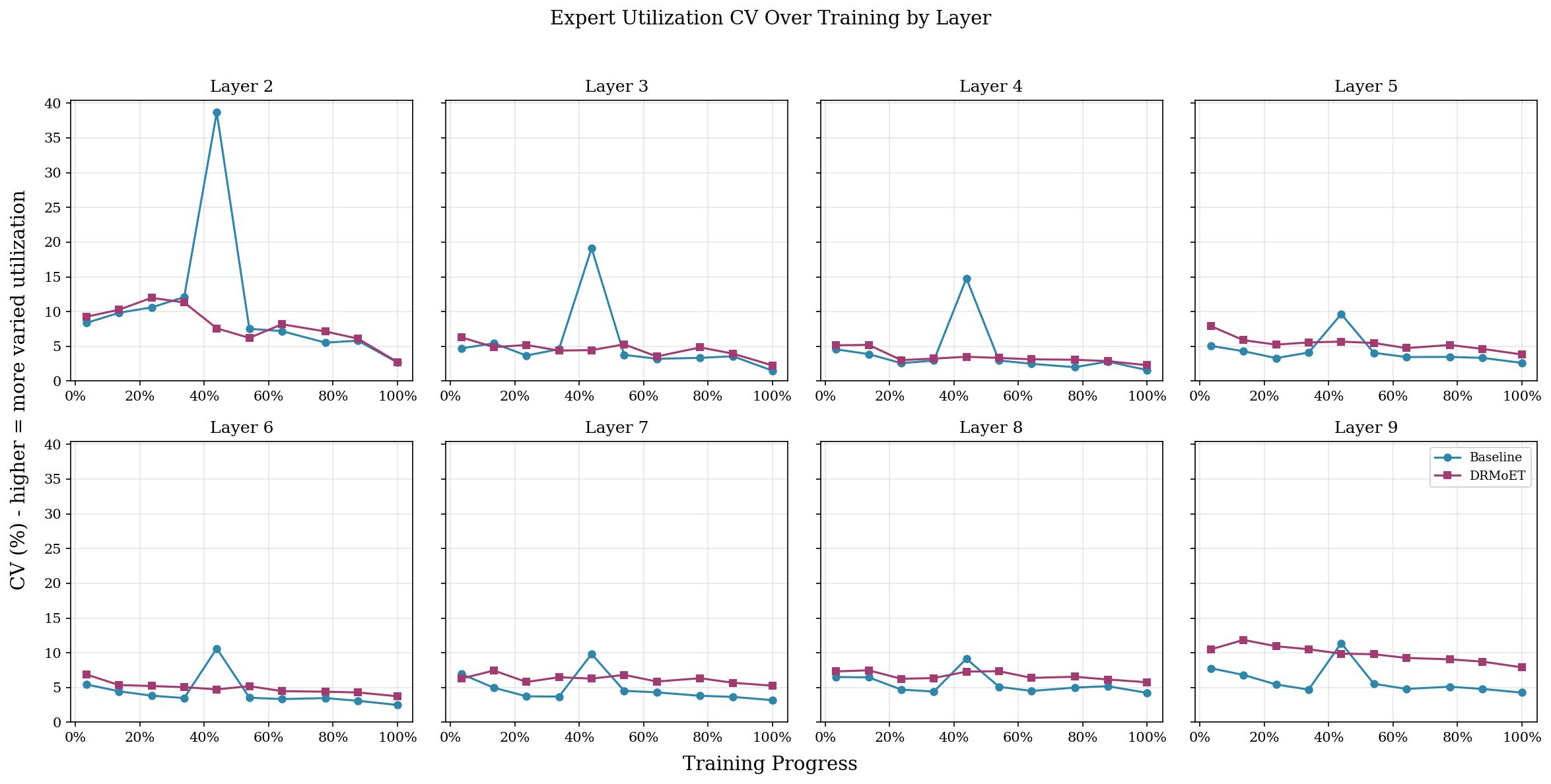}
  \caption{\textbf{Per-layer expert utilization variability over training.} DRMoET stabilizes routing dynamics in early layers and avoids the transient CV spikes observed in the baseline.}
  \label{fig:expert_utilization_per_layer}
\end{figure*}

\FloatBarrier
\section{Expert Utilization CV Over Training}
\label{app:util_cv}

\begin{table}[!htbp]
\centering
\small
\setlength{\tabcolsep}{3pt}
\renewcommand{\arraystretch}{1.0}
\begin{tabular}{rccc}
\toprule
\textbf{Step} & \textbf{Baseline CV} & \textbf{DRMoET CV} & \textbf{Diff} \\
\midrule
  540  & 2.75\% & 3.32\% & +0.57\% \\
 2160  & 2.46\% & 3.04\% & +0.58\% \\
 3780  & 1.78\% & 2.73\% & +0.95\% \\
 5400  & 2.00\% & 2.64\% & +0.64\% \\
 7020  & 6.47\% & 2.21\% & -4.26\% \\
 8640  & 1.60\% & 2.07\% & +0.46\% \\
10260  & 1.32\% & 2.07\% & +0.75\% \\
12420  & 1.43\% & 2.37\% & +0.94\% \\
14040  & 1.44\% & 2.04\% & +0.59\% \\
16000  & 0.91\% & 1.58\% & +0.67\% \\
\midrule
\textbf{Average} & \textbf{2.22\%} & \textbf{2.41\%} & \\
\bottomrule
\end{tabular}
\caption{\textbf{Expert utilization CV over training} (six routed experts plus two shared experts). Higher CV indicates greater relative variation in load across experts. DRMoET shows slightly higher average CV but avoids the baseline's transient mid-training imbalance spike (step 7k), consistent with controlled exploration early and modest specialization later.}
\label{tab:util_cv}
\end{table}

\FloatBarrier
\section{Coactivation Concentration Metrics}
\label{app:coactivation}

\begin{figure*}[!htbp]
  \centering
  \includegraphics[width=\linewidth]{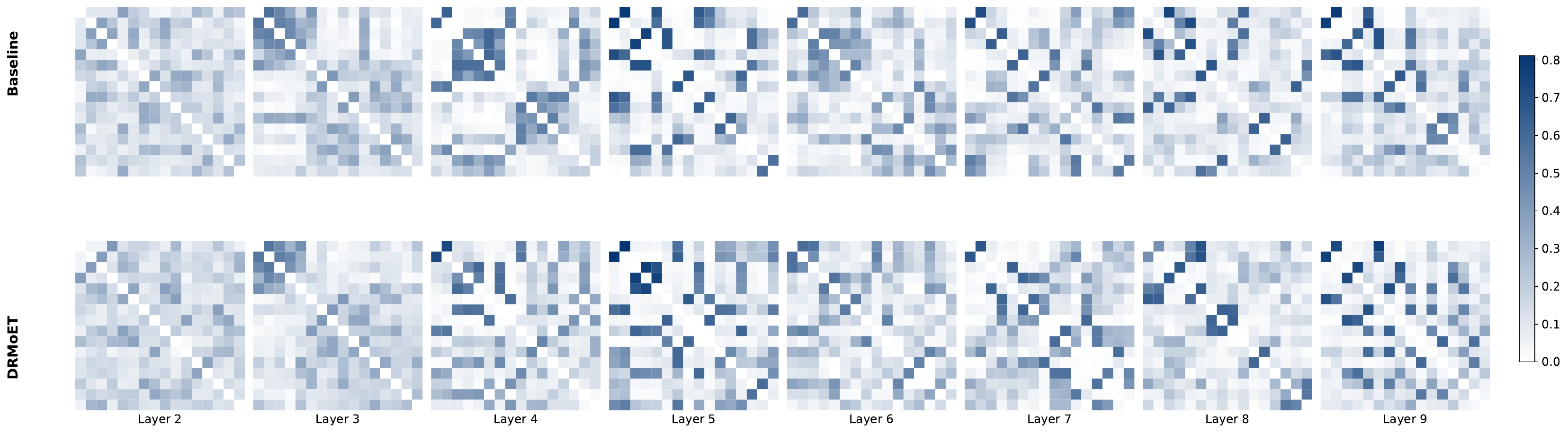}
  \caption{\textbf{Expert co-activation heatmaps} (layer-wise) comparing baseline vs.\ DRMoET. DRMoET produces more uniform co-activation in early/mid layers (L2--L5) and slightly more concentrated patterns in later layers (L6--L9).}
  \label{fig:coactivation}
\end{figure*}

\begin{table*}[!htbp]
\centering
\scriptsize
\setlength{\tabcolsep}{4pt}
\begin{tabular}{c|ccc|ccc|ccc}
\toprule
\multirow{2}{*}{Layer} &
\multicolumn{3}{c|}{Variance} &
\multicolumn{3}{c|}{Gini} &
\multicolumn{3}{c}{Top-3 Conc.} \\
& Baseline & DRMoET & $\Delta$ & Baseline & DRMoET & $\Delta$ & Baseline & DRMoET & $\Delta$ \\
\midrule
2 & 0.00518 & 0.00483 & -6.6\%  & 0.248 & 0.238 & -4.2\%  & 17.7\% & 17.5\% & -1.2\% \\
3 & 0.00800 & 0.00678 & -15.3\% & 0.301 & 0.275 & -8.4\%  & 19.7\% & 19.3\% & -2.4\% \\
4 & 0.01534 & 0.01395 & -9.0\%  & 0.413 & 0.393 & -4.9\%  & 24.3\% & 23.8\% & -2.4\% \\
5 & 0.02487 & 0.02188 & -12.0\% & 0.500 & 0.475 & -5.0\%  & 30.7\% & 28.7\% & -6.4\% \\
6 & 0.01477 & 0.01549 & +4.9\%  & 0.422 & 0.427 & +1.1\%  & 24.0\% & 25.1\% & +4.3\% \\
7 & 0.01694 & 0.01719 & +1.5\%  & 0.430 & 0.438 & +1.9\%  & 26.3\% & 26.0\% & -1.2\% \\
8 & 0.01886 & 0.01862 & -1.2\%  & 0.443 & 0.451 & +1.6\%  & 28.4\% & 27.7\% & -2.4\% \\
9 & 0.01902 & 0.02105 & +10.7\% & 0.451 & 0.465 & +3.2\%  & 27.7\% & 29.4\% & +5.9\% \\
\bottomrule
\end{tabular}
\caption{\textbf{Coactivation concentration metrics (DRMoET vs.\ Baseline).} Negative $\Delta$ = more uniform partnerships under DRMoET; positive $\Delta$ = increased specialization.}
\label{tab:coactivation_metrics}
\end{table*}

\FloatBarrier
\section{Interaction with the standard load-balancing loss}
\label{app:load-balancing-ablation}

At the 290M-active/746M-total scale, we remove the standard load-balancing loss from DRMoET while keeping the remaining reported training configuration unchanged. Table~\ref{tab:load_balancing_ablation} uses the same seven-task evaluation as the main results. The no-balancing DRMoET configuration reaches 0.5448, exceeding standard FLAME-MoE's 0.5403 on the average and on five tasks. DRMoET with balancing remains stronger on the average at 0.5523.

\begin{table}[ht]
\centering
\scriptsize
\setlength{\tabcolsep}{3pt}
\resizebox{\textwidth}{!}{%
\begin{tabular}{lccccccc|c}
\toprule
\textbf{Method} & \textbf{ARC-C} & \textbf{ARC-E} & \textbf{HellaSwag} & \textbf{PIQA} & \textbf{WinoGrande} & \textbf{SciQ} & \textbf{ReCoRD*} & \textbf{Average} \\
\midrule
FLAME-MoE (with balancing) & 0.2329 & 0.5295 & 0.3447 & 0.6697 & 0.5012 & 0.8020 & 0.7023 & 0.5403 \\
DRMoET without balancing & \textbf{0.2389} & 0.5539 & 0.3430 & 0.6877 & \textbf{0.5067} & 0.7770 & 0.7066 & 0.5448 \\
DRMoET with balancing & 0.2295 & \textbf{0.5682} & \textbf{0.3452} & \textbf{0.6888} & 0.5051 & \textbf{0.8190} & \textbf{0.7105} & \textbf{0.5523} \\
\bottomrule
\end{tabular}%
}
\caption{\textbf{Load-balancing interaction at 290M-active/746M-total scale.} Averages are over seven tasks. Standard FLAME-MoE includes balancing; a FLAME-MoE-without-balancing control is not reported. *ReCoRD is F1.}
\label{tab:load_balancing_ablation}
\end{table}

The robust objective is defined on expert outcomes that occur under the router. It does not require a separate balancing penalty to be evaluated, but cannot by itself ensure adequate observations for experts that receive almost no traffic. Allocation and observed competence therefore remain distinct concerns. The results support using the two objectives together in this recipe; they do not establish that DRMoET replaces balancing or quantify its incremental effect relative to an unreported no-balancing baseline.

\FloatBarrier
\section{Implementation Details}
\label{appendix:implementation-details}

\textbf{Training-token budgets.}
The \texttt{FLAME-MoE-290M-746M} experiments use approximately 33.5B training tokens. The \texttt{FLAME-MoE-1.7B-10.3B} comparison uses approximately 67B tokens. Within each scale and budget, the compared methods use the same architecture, data mixture, and training-token count.

We used $1\times10^{-3}$ for both auxiliary loss, and z-loss as training hyperparameters that all runs share.

\textbf{Data.}
The \texttt{FLAME-MoE-290M-746M} experiments use the same data the original \citet{kang2025flame} paper has used. For the \texttt{FLAME-MoE-1.7B-10.3B} run, due to the increment on \#tokens trained, we downloaded and processed more chunks of DCLM to guarantee we stick to the 1-epoch only training setup, which is commonly used while training LLMs.

\newpage
\section*{NeurIPS Paper Checklist}

\begin{enumerate}

\item {\bf Claims}
    \item[] Question: Do the main claims made in the abstract and introduction accurately reflect the paper's contributions and scope?
    \item[] Answer: \answerYes{} 
    \item[] Justification: The abstract and introduction explained how does DRMoET work, and to what scope it answers the problem the paper is trying to address.
    \item[] Guidelines:
    \begin{itemize}
        \item The answer \answerNA{} means that the abstract and introduction do not include the claims made in the paper.
        \item The abstract and/or introduction should clearly state the claims made, including the contributions made in the paper and important assumptions and limitations. A \answerNo{} or \answerNA{} answer to this question will not be perceived well by the reviewers. 
        \item The claims made should match theoretical and experimental results, and reflect how much the results can be expected to generalize to other settings. 
        \item It is fine to include aspirational goals as motivation as long as it is clear that these goals are not attained by the paper. 
    \end{itemize}

\item {\bf Limitations}
    \item[] Question: Does the paper discuss the limitations of the work performed by the authors?
    \item[] Answer: \answerYes{} 
    \item[] Justification: The paper mentioned some of the limitations, such as lack of larger scale models, different expert counts, and also train time analysis on OOD data.
    \item[] Guidelines:
    \begin{itemize}
        \item The answer \answerNA{} means that the paper has no limitation while the answer \answerNo{} means that the paper has limitations, but those are not discussed in the paper. 
        \item The authors are encouraged to create a separate ``Limitations'' section in their paper.
        \item The paper should point out any strong assumptions and how robust the results are to violations of these assumptions (e.g., independence assumptions, noiseless settings, model well-specification, asymptotic approximations only holding locally). The authors should reflect on how these assumptions might be violated in practice and what the implications would be.
        \item The authors should reflect on the scope of the claims made, e.g., if the approach was only tested on a few datasets or with a few runs. In general, empirical results often depend on implicit assumptions, which should be articulated.
        \item The authors should reflect on the factors that influence the performance of the approach. For example, a facial recognition algorithm may perform poorly when image resolution is low or images are taken in low lighting. Or a speech-to-text system might not be used reliably to provide closed captions for online lectures because it fails to handle technical jargon.
        \item The authors should discuss the computational efficiency of the proposed algorithms and how they scale with dataset size.
        \item If applicable, the authors should discuss possible limitations of their approach to address problems of privacy and fairness.
        \item While the authors might fear that complete honesty about limitations might be used by reviewers as grounds for rejection, a worse outcome might be that reviewers discover limitations that aren't acknowledged in the paper. The authors should use their best judgment and recognize that individual actions in favor of transparency play an important role in developing norms that preserve the integrity of the community. Reviewers will be specifically instructed to not penalize honesty concerning limitations.
    \end{itemize}

\item {\bf Theory assumptions and proofs}
    \item[] Question: For each theoretical result, does the paper provide the full set of assumptions and a complete (and correct) proof?
    \item[] Answer: \answerYes{} 
    \item[] Justification: For the proof of convergence, the whole proof is included in the appendix. The assumptions, and lemma, were provided in detail.
    \item[] Guidelines:
    \begin{itemize}
        \item The answer \answerNA{} means that the paper does not include theoretical results. 
        \item All the theorems, formulas, and proofs in the paper should be numbered and cross-referenced.
        \item All assumptions should be clearly stated or referenced in the statement of any theorems.
        \item The proofs can either appear in the main paper or the supplemental material, but if they appear in the supplemental material, the authors are encouraged to provide a short proof sketch to provide intuition. 
        \item Inversely, any informal proof provided in the core of the paper should be complemented by formal proofs provided in appendix or supplemental material.
        \item Theorems and Lemmas that the proof relies upon should be properly referenced. 
    \end{itemize}

    \item {\bf Experimental result reproducibility}
    \item[] Question: Does the paper fully disclose all the information needed to reproduce the main experimental results of the paper to the extent that it affects the main claims and/or conclusions of the paper (regardless of whether the code and data are provided or not)?
    \item[] Answer: \answerYes{} 
    \item[] Justification: Training configurations such as the pipeline used, what is the configuration of the model, what are the hyper-parameter used, etc. The detailed pseudocode was provided to ensure reproducibility.
    \item[] Guidelines:
    \begin{itemize}
        \item The answer \answerNA{} means that the paper does not include experiments.
        \item If the paper includes experiments, a \answerNo{} answer to this question will not be perceived well by the reviewers: Making the paper reproducible is important, regardless of whether the code and data are provided or not.
        \item If the contribution is a dataset and\slash or model, the authors should describe the steps taken to make their results reproducible or verifiable. 
        \item Depending on the contribution, reproducibility can be accomplished in various ways. For example, if the contribution is a novel architecture, describing the architecture fully might suffice, or if the contribution is a specific model and empirical evaluation, it may be necessary to either make it possible for others to replicate the model with the same dataset, or provide access to the model. In general. releasing code and data is often one good way to accomplish this, but reproducibility can also be provided via detailed instructions for how to replicate the results, access to a hosted model (e.g., in the case of a large language model), releasing of a model checkpoint, or other means that are appropriate to the research performed.
        \item While NeurIPS does not require releasing code, the conference does require all submissions to provide some reasonable avenue for reproducibility, which may depend on the nature of the contribution. For example
        \begin{enumerate}
            \item If the contribution is primarily a new algorithm, the paper should make it clear how to reproduce that algorithm.
            \item If the contribution is primarily a new model architecture, the paper should describe the architecture clearly and fully.
            \item If the contribution is a new model (e.g., a large language model), then there should either be a way to access this model for reproducing the results or a way to reproduce the model (e.g., with an open-source dataset or instructions for how to construct the dataset).
            \item We recognize that reproducibility may be tricky in some cases, in which case authors are welcome to describe the particular way they provide for reproducibility. In the case of closed-source models, it may be that access to the model is limited in some way (e.g., to registered users), but it should be possible for other researchers to have some path to reproducing or verifying the results.
        \end{enumerate}
    \end{itemize}

\item {\bf Open access to data and code}
    \item[] Question: Does the paper provide open access to the data and code, with sufficient instructions to faithfully reproduce the main experimental results, as described in supplemental material?
    \item[] Answer: \answerYes{} 
    \item[] Justification: The code is provided as supplementary material. The entire pipeline is included.
    \item[] Guidelines:
    \begin{itemize}
        \item The answer \answerNA{} means that paper does not include experiments requiring code.
        \item Please see the NeurIPS code and data submission guidelines (\url{https://neurips.cc/public/guides/CodeSubmissionPolicy}) for more details.
        \item While we encourage the release of code and data, we understand that this might not be possible, so \answerNo{} is an acceptable answer. Papers cannot be rejected simply for not including code, unless this is central to the contribution (e.g., for a new open-source benchmark).
        \item The instructions should contain the exact command and environment needed to run to reproduce the results. See the NeurIPS code and data submission guidelines (\url{https://neurips.cc/public/guides/CodeSubmissionPolicy}) for more details.
        \item The authors should provide instructions on data access and preparation, including how to access the raw data, preprocessed data, intermediate data, and generated data, etc.
        \item The authors should provide scripts to reproduce all experimental results for the new proposed method and baselines. If only a subset of experiments are reproducible, they should state which ones are omitted from the script and why.
        \item At submission time, to preserve anonymity, the authors should release anonymized versions (if applicable).
        \item Providing as much information as possible in supplemental material (appended to the paper) is recommended, but including URLs to data and code is permitted.
    \end{itemize}

\item {\bf Experimental setting/details}
    \item[] Question: Does the paper specify all the training and test details (e.g., data splits, hyperparameters, how they were chosen, type of optimizer) necessary to understand the results?
    \item[] Answer: \answerYes{} 
    \item[] Justification: The dataset used, and the reason we chose it is explained. We also provided the hyperparameters we used, as well as the optimizer chosen.
    \item[] Guidelines:
    \begin{itemize}
        \item The answer \answerNA{} means that the paper does not include experiments.
        \item The experimental setting should be presented in the core of the paper to a level of detail that is necessary to appreciate the results and make sense of them.
        \item The full details can be provided either with the code, in appendix, or as supplemental material.
    \end{itemize}

\item {\bf Experiment statistical significance}
    \item[] Question: Does the paper report error bars suitably and correctly defined or other appropriate information about the statistical significance of the experiments?
    \item[] Answer: \answerYes{} 
    \item[] Justification: To ensure the significance of our method, we tested it across several hyperparameter settings, and in all cases, it outperforms the baseline. This show that our method's gain is consistent.
    \item[] Guidelines:
    \begin{itemize}
        \item The answer \answerNA{} means that the paper does not include experiments.
        \item The authors should answer \answerYes{} if the results are accompanied by error bars, confidence intervals, or statistical significance tests, at least for the experiments that support the main claims of the paper.
        \item The factors of variability that the error bars are capturing should be clearly stated (for example, train/test split, initialization, random drawing of some parameter, or overall run with given experimental conditions).
        \item The method for calculating the error bars should be explained (closed form formula, call to a library function, bootstrap, etc.)
        \item The assumptions made should be given (e.g., Normally distributed errors).
        \item It should be clear whether the error bar is the standard deviation or the standard error of the mean.
        \item It is OK to report 1-sigma error bars, but one should state it. The authors should preferably report a 2-sigma error bar than state that they have a 96\% CI, if the hypothesis of Normality of errors is not verified.
        \item For asymmetric distributions, the authors should be careful not to show in tables or figures symmetric error bars that would yield results that are out of range (e.g., negative error rates).
        \item If error bars are reported in tables or plots, the authors should explain in the text how they were calculated and reference the corresponding figures or tables in the text.
    \end{itemize}

\item {\bf Experiments compute resources}
    \item[] Question: For each experiment, does the paper provide sufficient information on the computer resources (type of compute workers, memory, time of execution) needed to reproduce the experiments?
    \item[] Answer: \answerYes{} 
    \item[] Justification: We did mention the GPU we use, as well as how many steps were trained. The two together can provide a full picture of the cost for training one model in our setting.
    \item[] Guidelines:
    \begin{itemize}
        \item The answer \answerNA{} means that the paper does not include experiments.
        \item The paper should indicate the type of compute workers CPU or GPU, internal cluster, or cloud provider, including relevant memory and storage.
        \item The paper should provide the amount of compute required for each of the individual experimental runs as well as estimate the total compute. 
        \item The paper should disclose whether the full research project required more compute than the experiments reported in the paper (e.g., preliminary or failed experiments that didn't make it into the paper). 
    \end{itemize}
    
\item {\bf Code of ethics}
    \item[] Question: Does the research conducted in the paper conform, in every respect, with the NeurIPS Code of Ethics \url{https://neurips.cc/public/EthicsGuidelines}?
    \item[] Answer: \answerYes{} 
    \item[] Justification: All experiments done are on the GPU, without causing social harm.
    \item[] Guidelines:
    \begin{itemize}
        \item The answer \answerNA{} means that the authors have not reviewed the NeurIPS Code of Ethics.
        \item If the authors answer \answerNo, they should explain the special circumstances that require a deviation from the Code of Ethics.
        \item The authors should make sure to preserve anonymity (e.g., if there is a special consideration due to laws or regulations in their jurisdiction).
    \end{itemize}

\item {\bf Broader impacts}
    \item[] Question: Does the paper discuss both potential positive societal impacts and negative societal impacts of the work performed?
    \item[] Answer: \answerNo{} 
    \item[] Justification: The paper introduces a pretraining algorithm, which is irrelevant to broader societal impact.
    \item[] Guidelines:
    \begin{itemize}
        \item The answer \answerNA{} means that there is no societal impact of the work performed.
        \item If the authors answer \answerNA{} or \answerNo, they should explain why their work has no societal impact or why the paper does not address societal impact.
        \item Examples of negative societal impacts include potential malicious or unintended uses (e.g., disinformation, generating fake profiles, surveillance), fairness considerations (e.g., deployment of technologies that could make decisions that unfairly impact specific groups), privacy considerations, and security considerations.
        \item The conference expects that many papers will be foundational research and not tied to particular applications, let alone deployments. However, if there is a direct path to any negative applications, the authors should point it out. For example, it is legitimate to point out that an improvement in the quality of generative models could be used to generate Deepfakes for disinformation. On the other hand, it is not needed to point out that a generic algorithm for optimizing neural networks could enable people to train models that generate Deepfakes faster.
        \item The authors should consider possible harms that could arise when the technology is being used as intended and functioning correctly, harms that could arise when the technology is being used as intended but gives incorrect results, and harms following from (intentional or unintentional) misuse of the technology.
        \item If there are negative societal impacts, the authors could also discuss possible mitigation strategies (e.g., gated release of models, providing defenses in addition to attacks, mechanisms for monitoring misuse, mechanisms to monitor how a system learns from feedback over time, improving the efficiency and accessibility of ML).
    \end{itemize}
    
\item {\bf Safeguards}
    \item[] Question: Does the paper describe safeguards that have been put in place for responsible release of data or models that have a high risk for misuse (e.g., pre-trained language models, image generators, or scraped datasets)?
    \item[] Answer: \answerNA{} 
    \item[] Justification: The paper does not provide data/model that have a high risk for misuse. The training data is from DCLM, which is a public-available language dataset, that is safe to use/release.
    \item[] Guidelines:
    \begin{itemize}
        \item The answer \answerNA{} means that the paper poses no such risks.
        \item Released models that have a high risk for misuse or dual-use should be released with necessary safeguards to allow for controlled use of the model, for example by requiring that users adhere to usage guidelines or restrictions to access the model or implementing safety filters. 
        \item Datasets that have been scraped from the Internet could pose safety risks. The authors should describe how they avoided releasing unsafe images.
        \item We recognize that providing effective safeguards is challenging, and many papers do not require this, but we encourage authors to take this into account and make a best faith effort.
    \end{itemize}

\item {\bf Licenses for existing assets}
    \item[] Question: Are the creators or original owners of assets (e.g., code, data, models), used in the paper, properly credited and are the license and terms of use explicitly mentioned and properly respected?
    \item[] Answer: \answerYes{} 
    \item[] Justification: All previous related work is cited in the paper. We also mentioned explicitly what code base did we build our code base on top of.
    \item[] Guidelines:
    \begin{itemize}
        \item The answer \answerNA{} means that the paper does not use existing assets.
        \item The authors should cite the original paper that produced the code package or dataset.
        \item The authors should state which version of the asset is used and, if possible, include a URL.
        \item The name of the license (e.g., CC-BY 4.0) should be included for each asset.
        \item For scraped data from a particular source (e.g., website), the copyright and terms of service of that source should be provided.
        \item If assets are released, the license, copyright information, and terms of use in the package should be provided. For popular datasets, \url{paperswithcode.com/datasets} has curated licenses for some datasets. Their licensing guide can help determine the license of a dataset.
        \item For existing datasets that are re-packaged, both the original license and the license of the derived asset (if it has changed) should be provided.
        \item If this information is not available online, the authors are encouraged to reach out to the asset's creators.
    \end{itemize}

\item {\bf New assets}
    \item[] Question: Are new assets introduced in the paper well documented and is the documentation provided alongside the assets?
    \item[] Answer: \answerNA{} 
    \item[] Justification: No new assets was release from the paper.
    \item[] Guidelines:
    \begin{itemize}
        \item The answer \answerNA{} means that the paper does not release new assets.
        \item Researchers should communicate the details of the dataset\slash code\slash model as part of their submissions via structured templates. This includes details about training, license, limitations, etc. 
        \item The paper should discuss whether and how consent was obtained from people whose asset is used.
        \item At submission time, remember to anonymize your assets (if applicable). You can either create an anonymized URL or include an anonymized zip file.
    \end{itemize}

\item {\bf Crowdsourcing and research with human subjects}
    \item[] Question: For crowdsourcing experiments and research with human subjects, does the paper include the full text of instructions given to participants and screenshots, if applicable, as well as details about compensation (if any)? 
    \item[] Answer: \answerNA{} 
    \item[] Justification: The paper does not involve crowdsourcing nor research with human subjects.
    \item[] Guidelines:
    \begin{itemize}
        \item The answer \answerNA{} means that the paper does not involve crowdsourcing nor research with human subjects.
        \item Including this information in the supplemental material is fine, but if the main contribution of the paper involves human subjects, then as much detail as possible should be included in the main paper. 
        \item According to the NeurIPS Code of Ethics, workers involved in data collection, curation, or other labor should be paid at least the minimum wage in the country of the data collector. 
    \end{itemize}

\item {\bf Institutional review board (IRB) approvals or equivalent for research with human subjects}
    \item[] Question: Does the paper describe potential risks incurred by study participants, whether such risks were disclosed to the subjects, and whether Institutional Review Board (IRB) approvals (or an equivalent approval/review based on the requirements of your country or institution) were obtained?
    \item[] Answer: \answerNA{} 
    \item[] Justification: the paper does not involve crowdsourcing nor research with human subjects.
    \item[] Guidelines:
    \begin{itemize}
        \item The answer \answerNA{} means that the paper does not involve crowdsourcing nor research with human subjects.
        \item Depending on the country in which research is conducted, IRB approval (or equivalent) may be required for any human subjects research. If you obtained IRB approval, you should clearly state this in the paper. 
        \item We recognize that the procedures for this may vary significantly between institutions and locations, and we expect authors to adhere to the NeurIPS Code of Ethics and the guidelines for their institution. 
        \item For initial submissions, do not include any information that would break anonymity (if applicable), such as the institution conducting the review.
    \end{itemize}

\item {\bf Declaration of LLM usage}
    \item[] Question: Does the paper describe the usage of LLMs if it is an important, original, or non-standard component of the core methods in this research? Note that if the LLM is used only for writing, editing, or formatting purposes and does \emph{not} impact the core methodology, scientific rigor, or originality of the research, declaration is not required.
    \item[] Answer: \answerNA{} 
    \item[] Justification: the core method development in this research does not involve LLMs as any important, original, or non-standard components.
    \item[] Guidelines:
    \begin{itemize}
        \item The answer \answerNA{} means that the core method development in this research does not involve LLMs as any important, original, or non-standard components.
        \item Please refer to our LLM policy in the NeurIPS handbook for what should or should not be described.
    \end{itemize}

\end{enumerate}

\end{document}